\documentclass[pmlr]{jmlr}
\usepackage{amsmath, amssymb}
\usepackage{graphicx}
\usepackage{float}
\usepackage{hyperref}
\hypersetup{
  colorlinks=true,
  linkcolor=blue,
  citecolor=blue,
  urlcolor=blue,
  pdftitle={CRPS-Optimal Binning for Univariate Conformal Regression},
  pdfauthor={Paolo Toccaceli},
}
\usepackage{tikz}
\usepackage{caption}
\jmlrvolume{329}
\jmlryear{2026}
\jmlrworkshop{Conformal and Probabilistic Prediction with Applications}
\jmlrproceedings{PMLR}{Proceedings of Machine Learning Research}

\title{CRPS-Optimal Binning for Univariate Conformal Regression}

\author{\Name{Paolo Toccaceli} \Email{paolo.toccaceli@rhul.ac.uk} \\
       \addr {Centre for Reliable Machine Learning, \\
       Royal Holloway, University of London, \\
       Egham, Surrey\\
       TW20 0EX, UK}}
										
\editor{Johan Hallberg Szabadváry, Ulf Johansson, Henrik Boström}

\begin{document}

\maketitle

\begin{center}
\small\textit{Submitted to the Conformal and Probabilistic Prediction with Applications (COPA 2026).}
\end{center}

\begin{abstract}
We propose a method for non-parametric conditional distribution estimation based on
partitioning covariate-sorted observations into contiguous bins and using the within-bin
empirical CDF as the predictive distribution.
Bin boundaries are chosen to minimise the total leave-one-out Continuous Ranked Probability
Score (LOO-CRPS), which admits a closed-form cost function with $O(n^2 \log n)$
precomputation and $O(n^2)$ storage; the globally optimal $K$-partition is recovered by a
dynamic programme in $O(n^2 K)$ time.
We select $K$ by $K$-fold cross-validation of test CRPS, which yields a U-shaped
criterion with a well-defined minimum.
Having selected $K^*$ and fitted the full-data partition, we form
a conformal prediction set based on CRPS
as the nonconformity score, which carries a finite-sample marginal coverage guarantee at
any prescribed level $\varepsilon$. The conformal prediction is transductive and data-efficient, 
as all observations are used for both partitioning and p-value calculation, with no need to 
reserve a hold-out set.
On real benchmarks against split-conformal competitors (Gaussian split conformal,
CQR, CQR-QRF, and conformalized isotonic distributional regression), the full-$n$
method produces narrower prediction intervals while maintaining near-nominal coverage;
a matched-sample comparison restricting to the same training half as the competitors is also reported.
\end{abstract}

\begin{keywords}
conformal prediction, conformal regression, distribution-free, non-parametric regression, optimal binning
\end{keywords}

\section{Introduction}

A fundamental problem in supervised learning is to estimate not just the conditional mean
$\mathbb{E}[Y \mid X=x]$ but the full conditional distribution $P(Y \mid X=x)$.
A distributional forecast quantifies uncertainty and therefore enables principled decision-making under
risk, hypothesis testing at a test point, and the construction of valid prediction sets.

The simplest non-parametric approach is to collect nearby training observations and use
their Empirical Cumulative Distribution Function (ECDF) as a stand-in for $P(Y \mid X=x)$.
Other approaches exist, e.g. Kernel density methods, but they have high sensitivity to bandwidth choice 
or significant computational requirements.

When the covariate is one-dimensional and the data are sorted by $x$, a natural alternative
is to partition the sorted sequence into contiguous bins and predict with the within-bin
ECDF.

The central question then is how to place bin boundaries optimally.
Informal choices, such as equal-width or equal-count bins, are oblivious to the
heteroscedasticity of the response and to the local density of the covariate.
We argue that bin boundaries should minimise a proper scoring rule evaluated on the
training data, so that the binning criterion and the predictive goal are aligned.
Among proper scoring rules, the Continuous Ranked Probability Score (CRPS) is particularly 
well suited to this purpose: it
targets the full distribution (not a single functional), it admits a closed-form
leave-one-out formula that depends only on pairwise absolute differences, and it is the
same score used in the conformal prediction step, ensuring coherence between bin selection
and prediction-set construction.

\paragraph{Contributions.}
\begin{enumerate}
  \item \textbf{Closed-form LOO-CRPS cost.}
    We derive the total leave-one-out CRPS of a bin of size $m$ as
    $\mathrm{cost}(S) = m W/(m-1)^2$, where $W = \sum_{\ell < r}|y_\ell - y_r|$ is the
    pairwise dispersion (Proposition~\ref{prop:cost}).
    This scalar is updated in $O(\log n)$ per extension step, giving $O(n^2 \log n)$
    precomputation and $O(n^2)$ storage for all subintervals.
  \item \textbf{Exact optimal partitioning via dynamic programming.}
    The additive cost structure satisfies the optimal substructure property, so a DP
    recovers the globally optimal $K$-partition in $O(n^2 K)$ time (Section~\ref{sec:dp}).
    Unlike greedy binary segmentation~\citep{AugerLawrence1989}, which fixes cut points
    sequentially and can miss the global optimum, the DP evaluates all continuations
    simultaneously and is exact --- and vastly more efficient than exhaustive search over
    all $\binom{n-1}{K-1}$ candidate $K$-partitions.
  \item \textbf{Conformal prediction sets.}
    We construct CRPS-based conformal prediction sets with finite-sample marginal coverage
    guarantees.  In the split-conformal case, convexity of the score guarantees that
    prediction sets are always connected intervals; in our transductive setting, single-interval
    structure is observed empirically in all experiments.
    The regression form of the Venn Predictor (a constant-width family of augmented ECDFs)
    is formalised as a complementary output in Appendix~\ref{app:venn}.
\end{enumerate}
A distinguishing feature of the method is that it is \emph{transductive}: all $n$
observations are used for both partitioning and conformal calibration, with no
data held out.
Split-conformal competitors must reserve a calibration set, effectively halving
the sample available for model fitting.
This data-efficiency advantage is particularly pronounced in small-sample settings.


\section{Related Work}
\label{sec:related}

\paragraph{Conformal prediction for regression.}
Conformal prediction~\citep{VovkEtAl2022} wraps any nonconformity score in a
distribution-free prediction set with finite-sample marginal coverage;
~\citet{LeiEtAl2018} provide a comprehensive treatment for the regression
setting, covering a range of nonconformity scores and establishing finite-sample guarantees.
The inductive (split-conformal) variant~\citep{Papadopoulos2002} divides training data into
a fitting half and a calibration half; the quantile of calibration nonconformity scores
determines the prediction set for a new point.
Conformalized Quantile Regression (CQR)~\citep{RomanoEtAl2019} improves efficiency in
heteroscedastic settings by using conditional quantile estimates as the base predictor and
measuring residuals relative to the estimated interval.
Mondrian conformal prediction~\citep{VovkEtAl2022} stratifies the calibration set into
categories (tax\-onomies) so that coverage holds conditionally within each stratum; our
method is a Mondrian predictor with data-adaptive bins as strata and CRPS as the
nonconformity score.
~\citet{SesiaRomano2021} propose conformal prediction based on conditional
histograms, stratifying calibration scores by the value of a one-dimensional predictor
(such as an estimated quantile); our method instead partitions on the covariate $x$ directly
and selects bin boundaries to minimise LOO-CRPS rather than conditioning on a downstream
predictor score.

\paragraph{Venn predictors.}
Venn predictors output a set of probability distributions, one for
each candidate label of the test point.
This work draws much of its inspiration from the Venn-ABERS predictors for binary
classification proposed by ~\citet{VovkPetej2014}: isotonic regression 
on an underlying scoring function implicitly identifies an optimal partition of the 
score range into reference classes, each with finite-sample calibration guarantees.
Our method mirrors this structure on the covariate axis: the DP selects CRPS-optimal
bins, defining reference classes whose within-bin ECDF best describes the local
conditional distribution.

\paragraph{Conformal predictive systems and distributional conformal prediction.}
\citet{AllenEtAl2025} establish a unifying framework: any distributional
regression procedure that is in-sample calibrated, when conformalized, yields a conformal
predictive system with out-of-sample calibration guarantees under exchangeability.
Their framework covers conformal binning, that is, partitioning calibration data into
covariate-similar groups and predicting with within-group ECDFs, and conformal
isotonic distributional regression (IDR) as canonical instances, but leaves the
bin-selection criterion unspecified.
The present paper fills this gap: we identify CRPS-optimal contiguous bins via
dynamic programming and provide a closed-form cost formula, a CV criterion for $K$,
and structural results on the resulting prediction set.
Since the within-bin ECDF is in-sample auto-calibrated by construction, our
conformal binning step is an instance of their framework and inherits the
corresponding calibration guarantee.
\citet{ChernozhukovEtAl2021} construct (approximately) conditionally
valid prediction intervals by permuting PIT residuals; their approach targets conditional
validity via a rank-based argument rather than CRPS-optimal predictive distributions.
\citet{RandahlEtAl2026} enforce coverage within pre-specified
\emph{outcome} bins to promote coverage equity across label subgroups; their partition
is on the response $y$ rather than on the covariate $x$, and the objective is coverage
equity rather than predictive sharpness.

\paragraph{Conditional distribution estimation.}
Model-free competitors include Quantile Regression Forests~\citep{Meinshausen2006}
and NGBoost~\citep{DuanEtAl2020}; unlike these, our method requires only observations
sorted by covariate and makes no distributional assumption.

\paragraph{Optimal partitioning.}
The DP recurrence (Section~\ref{sec:dp}) is structurally identical to exact segment
neighbourhood algorithms~\citep{AugerLawrence1989,KillickEtAl2012}.
The key distinction is objective: classical methods seek distributional homogeneity,
whereas we minimise predictive LOO-CRPS.

\section{Setup}
\label{sec:setup}

Let $(x_1, y_1), \ldots, (x_n, y_n) \in \mathbb{R}^2$ be a training sample.
Sort observations by covariate value so that $x_{(1)} \le x_{(2)} \le \cdots \le x_{(n)}$,
and write $y_i$ for the response paired with $x_{(i)}$.

A \emph{$K$-partition} of the sorted observations is a sequence of indices
$0 = b_0 < b_1 < \cdots < b_K = n$ defining $K$ contiguous bins
$B_k = \{b_{k-1}+1, \ldots, b_k\}$ for $k = 1, \ldots, K$.
Within bin $B_k$, the predictive distribution for a new $x \in [x_{(b_{k-1}+1)}, x_{(b_k)}]$
is taken to be the empirical CDF of $\{y_i : i \in B_k\}$.

\section{LOO-CRPS Cost of a Bin}
\label{sec:cost}

\subsection*{The CRPS: definition and geometric interpretation}

For a predictive CDF $F$ and a scalar outcome $y \in \mathbb{R}$, the
\emph{Continuous Ranked Probability Score}~\citep{GneitingRaftery2007} is the integrated squared difference
between $F$ and the CDF of a point mass at $y$:
\begin{equation}
  \label{eq:crps-integral}
  \mathrm{CRPS}(F, y)
  = \int_{-\infty}^{\infty} \bigl(F(t) - \mathbf{1}[t \ge y]\bigr)^2\,\mathrm{d}t.
\end{equation}
Here $\mathbf{1}[t \ge y]$ is the right-continuous CDF of a Dirac mass at $y$;
some references write $\mathbf{1}[t < y]$ (the left-continuous version), which
agrees everywhere except at the single point $t = y$ and leaves the integral unchanged.
Geometrically, the integrand is the squared vertical gap between $F$ and the step
at~$y$ (Figure~\ref{fig:crps_def}).
CRPS is zero if and only if $F$ is a point mass at~$y$; diffuse or mis-centred forecasts
accumulate a larger integrated gap and hence a larger score.
An equivalent energy-score form~\cite[Eq.~(21)]{GneitingRaftery2007}, convenient for computation, is
\begin{equation}
  \label{eq:crps-energy}
  \mathrm{CRPS}(F, y)
  = \mathbb{E}_F|X - y| - \tfrac{1}{2}\,\mathbb{E}_F|X - X'|,
\end{equation}
where $X, X'$ are independent draws from $F$.
The first term penalises mis-location; the second subtracts a self-dispersion penalty that
rewards sharpness, preventing the score from being minimised by a diffuse prior.
CRPS is a \emph{strictly proper} scoring rule for the class of all distributions with
finite first moment~\citep{GneitingRaftery2007}: its expected value under $P$ is uniquely
minimised by the forecast $\hat{F} = P$.

\begin{figure}[ht]
\centering
\begin{tikzpicture}[x=0.98cm, y=2.6cm]
  \draw[->] (-0.5,0) -- (5.7,0) node[right] {$t$};
  \draw[->] (0,-0.06) -- (0,1.23) node[above, font=\small] {CDF};
  \foreach \v/\lbl in {0.25/{\scriptstyle\frac{1}{4}},
                       0.50/{\scriptstyle\frac{1}{2}},
                       0.75/{\scriptstyle\frac{3}{4}},
                       1.00/{\scriptstyle 1}} {
    \draw (-0.07,\v)--(0.07,\v);
    \node[left, inner sep=1pt] at (-0.09,\v) {$\lbl$};
  }
  \fill[blue!15] (1.0,0.00) rectangle (2.0,0.25);
  \fill[blue!15] (2.0,0.00) rectangle (3.0,0.50);
  \fill[orange!22] (3.0,0.50) rectangle (3.5,1.00);
  \fill[orange!22] (3.5,0.75) rectangle (4.5,1.00);
  \draw[very thick, gray!55!black] (-0.4,0.00)--(3.0,0.00);
  \draw[very thick, gray!55!black] (3.0,1.00)--(5.5,1.00);
  \draw[dashed, gray!55!black] (3.0,0.00)--(3.0,1.00);
  \node[gray!55!black, right, font=\small] at (4.7,1.13)
    {$\mathbf{1}[t \ge y]$};
  \draw[very thick, blue!65!black] (-0.4,0.00)--(1.0,0.00);
  \draw[very thick, blue!65!black] (1.0,0.25)--(2.0,0.25);
  \draw[very thick, blue!65!black] (2.0,0.50)--(3.5,0.50);
  \draw[very thick, blue!65!black] (3.5,0.75)--(4.5,0.75);
  \draw[very thick, blue!65!black] (4.5,1.00)--(5.5,1.00);
  \foreach \x/\lo/\hi in {1.0/0.00/0.25, 2.0/0.25/0.50,
                           3.5/0.50/0.75, 4.5/0.75/1.00} {
    \draw[dashed, blue!50!black] (\x,\lo)--(\x,\hi);
    \draw[thin] (\x,-0.025)--(\x,0.025);
  }
  \node[below, font=\scriptsize] at (1.0,-0.045) {$y_1$};
  \node[below, font=\scriptsize] at (2.0,-0.045) {$y_2$};
  \node[below, font=\scriptsize] at (3.5,-0.045) {$y_3$};
  \node[below, font=\scriptsize] at (4.5,-0.045) {$y_4$};
  \node[blue!65!black, right, font=\small] at (4.7,0.86) {$\hat{F}_m(t)$};
  \node[below, font=\small] at (3.0,-0.055) {$y$};
  \draw[<->, thin, gray!60] (2.5,0.02)--(2.5,0.48);
  \node[font=\scriptsize, right] at (2.55,0.25)
    {$\hat{F}_m(t)-\mathbf{1}[t{\ge}y]$};
\end{tikzpicture}
\caption{Geometric interpretation of $\mathrm{CRPS}(\hat{F}_m, y)$ as the integral
of the squared vertical gap between the predictive CDF $\hat{F}_m$ (blue step function,
$m = 4$ atoms) and the step $\mathbf{1}[t \ge y]$ (grey) at the observed outcome~$y$.
Blue shading marks intervals where $\hat{F}_m(t) > \mathbf{1}[t \ge y]$ (too little
forecast mass above~$t$); orange marks intervals where $\hat{F}_m(t) < \mathbf{1}[t \ge y]$
(too little mass below~$t$).
CRPS $= \int (\hat{F}_m(t) - \mathbf{1}[t\ge y])^2\,\mathrm{d}t$ is large when
the forecast is mis-centred or over-dispersed relative to~$y$.}
\label{fig:crps_def}
\end{figure}

\subsection*{Empirical CDF and the LOO cost}

For a predictive CDF $\hat{F}$ consisting of $m$ equally weighted atoms and outcome $y$,
applying~\eqref{eq:crps-energy} gives
\[
  \mathrm{CRPS}(\hat{F}, y)
  = \frac{1}{m}\sum_{i=1}^m |y_i - y|
    - \frac{1}{2m^2}\sum_{i=1}^m\sum_{j=1}^m |y_i - y_j|.
\]

Let $S$ be a bin of size $m$ with response values $y_1, \ldots, y_m$.
Write
\[
  d_k = \sum_{\ell \ne k} |y_\ell - y_k|,
  \qquad
  D = \sum_{\ell \ne r} |y_\ell - y_r| = 2\sum_{\ell < r}|y_\ell - y_r|.
\]
The leave-one-out predictive distribution for observation $k$ is the ECDF of
$\{y_\ell : \ell \in S,\, \ell \ne k\}$.
Applying the CRPS formula with $m - 1$ atoms:
\begin{align*}
  \mathrm{CRPS}(\hat{F}_{S\setminus\{k\}},\, y_k)
  &= \frac{d_k}{m-1}
     - \frac{D - 2d_k}{2(m-1)^2}.
\end{align*}

\begin{proposition}
\label{prop:cost}
The total leave-one-out CRPS of bin $S$ is
\[
  \mathrm{cost}(S)
  \;=\; \sum_{k \in S}\mathrm{CRPS}(\hat{F}_{S\setminus\{k\}},\, y_k)
  \;=\; \frac{m}{2(m-1)^2}\, D
  \;=\; \frac{m}{(m-1)^2} \sum_{\ell < r,\; \ell,r\in S} |y_\ell - y_r|.
\]
\end{proposition}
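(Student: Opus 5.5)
The plan is to sum the per-observation leave-one-out formula displayed just before Proposition~\ref{prop:cost} over $k \in S$ and simplify. First we confirm that formula, since everything rests on it. The ECDF of $S\setminus\{k\}$ has $m-1$ equally weighted atoms $\{y_\ell : \ell \ne k\}$. In the energy form~\eqref{eq:crps-energy}, the location term is therefore $\frac{1}{m-1}\sum_{\ell\ne k}|y_\ell - y_k| = d_k/(m-1)$. The self-dispersion term is $\frac{1}{2(m-1)^2}$ times the ordered-pair sum over $S\setminus\{k\}$. That sum equals $D$ minus every ordered pair involving $k$, and there are two such contributions per partner ($(k,\ell)$ and $(\ell,k)$). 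It is thus $D - 2d_k$, which recovers the displayed expression.

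Next we sum over $k$, using the single identity $\sum_{k\in S} d_k = D$, which holds because each ordered pair $(\ell,k)$ with $\ell\ne k$ is counted exactly once. The location terms then give $D/(m-1)$. The dispersion terms give
\[
-\frac{1}{2(m-1)^2}\sum_{k}(D - 2d_k) = -\frac{mD - 2D}{2(m-1)^2} = -\frac{(m-2)D}{2(m-1)^2}.
\]

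Combining over the common denominator $2(m-1)^2$ gives
\[
\frac{2(m-1)D - (m-2)D}{2(m-1)^2} = \frac{mD}{2(m-1)^2}.
\]
Substituting $D = 2\sum_{\ell<r}|y_\ell - y_r|$ then yields the final form $\frac{m}{(m-1)^2}\sum_{\ell<r}|y_\ell-y_r|$.

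There is no real obstacle; the argument is pure bookkeeping. The one point that needs care is the factor $2$ in $D - 2d_k$, that is, correctly counting ordered pairs that involve the left-out index. We also note that the statement implicitly requires $m \ge 2$ so that the leave-one-out ECDF is defined, and we will state this assumption explicitly.
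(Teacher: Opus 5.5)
Your proposal is correct and matches the paper's proof step for step: you sum the per-observation leave-one-out expression over $k$, use $\sum_k d_k = D$ for the location terms and $\sum_k (D - 2d_k) = (m-2)D$ for the dispersion terms, and combine to get $mD/(2(m-1)^2) = mW/(m-1)^2$. Your re-derivation of the per-observation formula (including the factor $2$ in $D - 2d_k$) and the explicit $m \ge 2$ assumption are sensible additions; the paper handles these only in the text around the proposition.
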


\begin{proof}
See Appendix~\ref{app:proofs}.
\end{proof}

For a bin spanning sorted indices $i$ through $j$ (with $m = j - i + 1$), write
\[
  W(i,j) = \sum_{\substack{\ell < r \\ \ell,r \in \{i,\ldots,j\}}} |y_\ell - y_r|,
  \qquad
  c(i,j) = \frac{j-i+1}{(j-i)^2}\, W(i,j).
\]

\section{Dynamic Programme}
\label{sec:dp}
It turns out that it is possible to solve the optimal partitioning problem using dynamic programming. 
The DP is efficient compared to the combinatorial explosion of exhaustive search, which would enumerate
all $\binom{n-1}{K-1}$ candidate $K$-partitions, and is exact, unlike, say, greedy binary segmentation, 
which fixes cut points sequentially and can miss the global optimum.
The problem of minimizing the total LOO-CRPS of a $K$-partition can be solved by a DP because the cost function
can be expressed in a recurrence that exhibits the optimal substructure property, as we will show below.

Define $\mathrm{dp}[k][j]$ as the minimum total LOO-CRPS achievable by partitioning
observations $1, \ldots, j$ into exactly $k$ contiguous bins.

\paragraph{Minimum bin size.}
The LOO cost $c(i,j)$ requires at least two observations: for a singleton bin
the leave-one-out distribution is empty, so the LOO-CRPS is undefined.
We set $c(i,i) = +\infty$ by convention, which causes the DP to exclude
singleton bins automatically.
The index ranges below implicitly assume $j - i \ge 1$ (i.e.\ $m \ge 2$).

\paragraph{Base case.}
\[
  \mathrm{dp}[1][j] = c(1, j), \qquad j = 2, \ldots, n.
\]

\paragraph{Recurrence.}
If $\mathrm{dp}[k-1][i]$ gives the cost of the optimal partition of the first
$i$ observations into $k-1$ bins, then to partition $1,\ldots,j$ into $k$ bins
optimally it suffices to try every candidate last-bin start $i+1$:
the last bin covers observations $i+1,\ldots,j$ with cost $c(i+1,j)$, and the
preceding $i$ observations are already optimally split at cost $\mathrm{dp}[k-1][i]$.
Taking the minimum over $i$ gives the recurrence ($k \ge 2$, $j \ge k$):
\[
  \mathrm{dp}[k][j]
  = \min_{k-1 \,\le\, i \,<\, j}
    \Bigl\{\mathrm{dp}[k-1][i] + c(i+1,\, j)\Bigr\}.
\]

\paragraph{Solution.} The optimal $K$-partition cost is $\mathrm{dp}[K][n]$.
Bin boundaries are recovered by backtracking the argmin at each step.

The complete pseudocode is given as Algorithm~\ref{alg:dp} in Appendix~\ref{app:impl};
a Python implementation is also available in the \texttt{crpsconfreg} package.

\section{Precomputation and Complexity}
\label{sec:precomp}

\paragraph{Precomputing $W(i,j)$.}
Fix left endpoint $i$ and scan $j = i, i+1, \ldots, n$.
Let $m = j - i + 1$ denote the current number of elements in the bin $[i, j]$.
Adding $y_{j+1}$ to the set $\{y_i, \ldots, y_j\}$ increases the pairwise
dispersion $W$ by
$\Delta W = \sum_{\ell=i}^{j}|y_\ell - y_{j+1}|$.
To evaluate this sum in $O(\log n)$ without iterating over all $m$ elements,
split by sign.
Let $r$ denote the number of existing values $\le y_{j+1}$
(the \emph{rank} of $y_{j+1}$ in the current set),
$S_{\le} = \sum_{\ell:\, y_\ell \le y_{j+1}} y_\ell$,
and $S_{>} = \sum_{\ell:\, y_\ell > y_{j+1}} y_\ell$.
Then
\begin{align*}
  \Delta W
  &= \sum_{\ell:\, y_\ell \le y_{j+1}} (y_{j+1} - y_\ell)
   \;+\; \sum_{\ell:\, y_\ell > y_{j+1}} (y_\ell - y_{j+1}) \\
  &= y_{j+1} \cdot r - S_{\le}
   \;+\; S_{>} - y_{j+1}\cdot(m - r).
\end{align*}
The quantities $r$ and $S_{\le}$ are \emph{prefix queries}: a count and a sum
over all stored values up to a query point. $S_{>}$ follows as the running total minus
$S_{\le}$.
Prefix queries can be supported efficiently using Fenwick trees 
(Binary Indexed Trees)~\citep{Fenwick1994}, for which searches and insertions 
can be performed in $O(\log n)$ time. We use two Fenwick trees (one for counts, one 
for values) indexed by value rank.  
Since there are $O(n)$ left endpoints and $O(n)$ extensions for each,
the total precomputation is $O(n^2 \log n)$ with $O(n^2)$ storage.
Data-structure trade-offs are discussed in
Appendix~\ref{app:impl}.

\paragraph{DP.}
The recurrence evaluates in $O(n^2 K)$ time.
There are $K$ layers and $O(n)$ entries per layer; filling each entry
$\mathrm{dp}[k][j]$ requires scanning $O(n)$ candidate split points $i$,
with each lookup of $c(i+1,j)$ taking $O(1)$ from the precomputed table.

\paragraph{Complexity tightness.}
In some cases, the efficiency of the DP can be further improved by exploiting the structure of the cost function.
A standard result of algorithm theory often referred to as the Knuth--Yao theorem~\citep{Knuth1971,Yao1980} 
indicates that if the cost function satisfies a quadrangle inequality, then the DP can be further sped up. 
Unfortunately, the LOO-CRPS cost violates the quadrangle inequality (explicit counterexample in
Appendix~\ref{app:qi}), so, at least under this respect, $O(n^2 K)$ is tight: no Knuth--Yao speedup applies.

\section{Selecting \texorpdfstring{$K$}{K}}
\label{sec:K}

The DP finds the optimal partition, i.e. the one that minimises total LOO-CRPS, for a \emph{fixed} $K$.
A natural approach to selecting $K$ is to treat the total LOO-CRPS $\mathrm{dp}[K][n]$
as a function of $K$ and pick its minimum\footnote{
As an alternative, one could place a Dirichlet process prior over the partition; the number of clusters
is then determined by the concentration parameter rather than cross-validation.
While principled, this approach does not in general minimise CRPS, and the choice
of concentration parameter introduces a separate hyperparameter.
We use CV as a simpler, assumption-free alternative.}. 
As it is customary, we perform cross-validation to strike a balance between underfitting and overfitting.
The sorted observations are assigned to folds by interleaving:
observation $i$ (in the $x$-sorted order) goes to fold $i \bmod F$,
so that every fold inherits the $x$-sorted structure (note that we use $F$ for the number of folds to avoid
confusion with the number of bins $K$).
For each fold $f \in \{0,\ldots,F-1\}$, let $\mathcal{T}_f$ and
$\mathcal{V}_f$ denote the training and test subsets.
For each candidate $K$, find the optimal $K$-partition $\hat{\mathcal{P}}_K^{(f)}$
on $\mathcal{T}_f$ using the DP, then evaluate the test CRPS on $\mathcal{V}_f$:
\[
  \mathrm{TestCRPS}_f(K)
  = \frac{1}{|\mathcal{V}_f|}
    \sum_{(x_i, y_i) \in \mathcal{V}_f}
    \mathrm{CRPS}\!\left(\hat{F}_{b(x_i)}^{(f)},\, y_i\right),
\]
where $b(x_i)$ is the bin in $\hat{\mathcal{P}}_K^{(f)}$ that contains $x_i$ and
$\hat{F}_{b(x_i)}^{(f)}$ is the ECDF of the training $y$-values in that bin.
The final criterion averages across folds:
\[
  \overline{\mathrm{TestCRPS}}(K) = \frac{1}{F}\sum_{f=0}^{F-1} \mathrm{TestCRPS}_f(K),
  \qquad
  K^* = \operatorname*{arg\,min}_{K=1,\ldots,K_{\max}} \overline{\mathrm{TestCRPS}}(K).
\]
We use $F=5$ folds and set $K_{\max} = \lfloor n/10 \rfloor$ throughout,
ensuring the minimum expected bin size on each training fold is at least $8$;
values larger than $\lfloor n/10 \rfloor$ typically produce nearly empty bins
and are penalised heavily by test CRPS regardless.

\paragraph{Running example.}
To illustrate the methods of this and subsequent sections, we use a
heteroscedastic synthetic dataset with $n = 1000$ observations:
$X_i \sim \mathrm{Uniform}(0, 3)$,
$Y_i \mid X_i = x \sim \mathcal{N}(3x,\,(1+x)^2)$.
The conditional mean grows from $0$ to $9$ and the conditional standard deviation
from $1$ to $4$ over $[0,3]$, giving a $16\times$ increase in variance.
Observations are sorted by $x$ before all subsequent steps.
A full analysis is given in Appendix~\ref{app:example}.

\begin{figure}[h]
\centering
\includegraphics[width=0.78\textwidth]{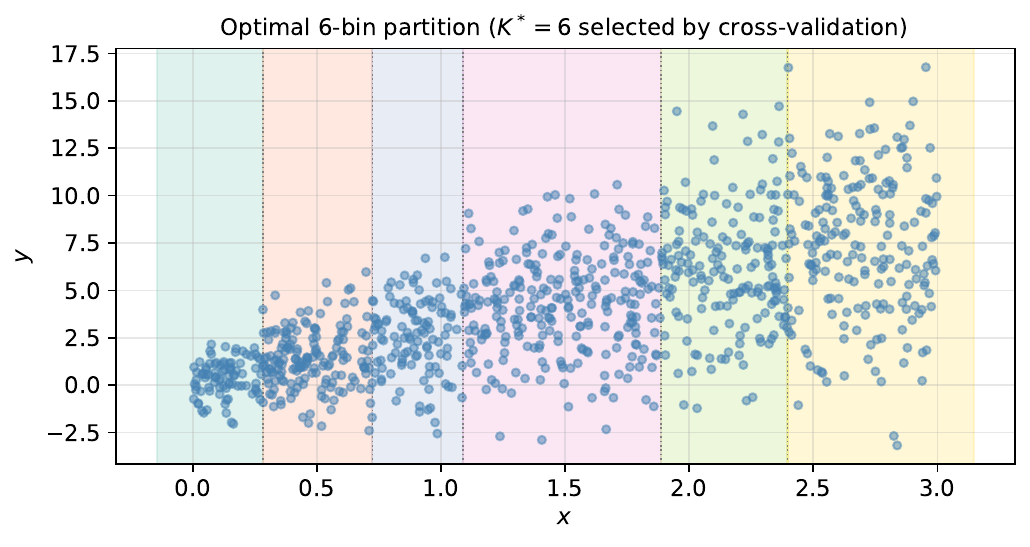}
\caption{The synthetic dataset with the optimal $6$-bin partition ($K^* = 6$).
Shaded regions correspond to the six bins; dotted vertical lines mark the bin boundaries
(midpoints between adjacent training observations).}
\label{fig:partition}
\end{figure}

%
%

\begin{figure}[h]
\centering
\includegraphics[width=\textwidth]{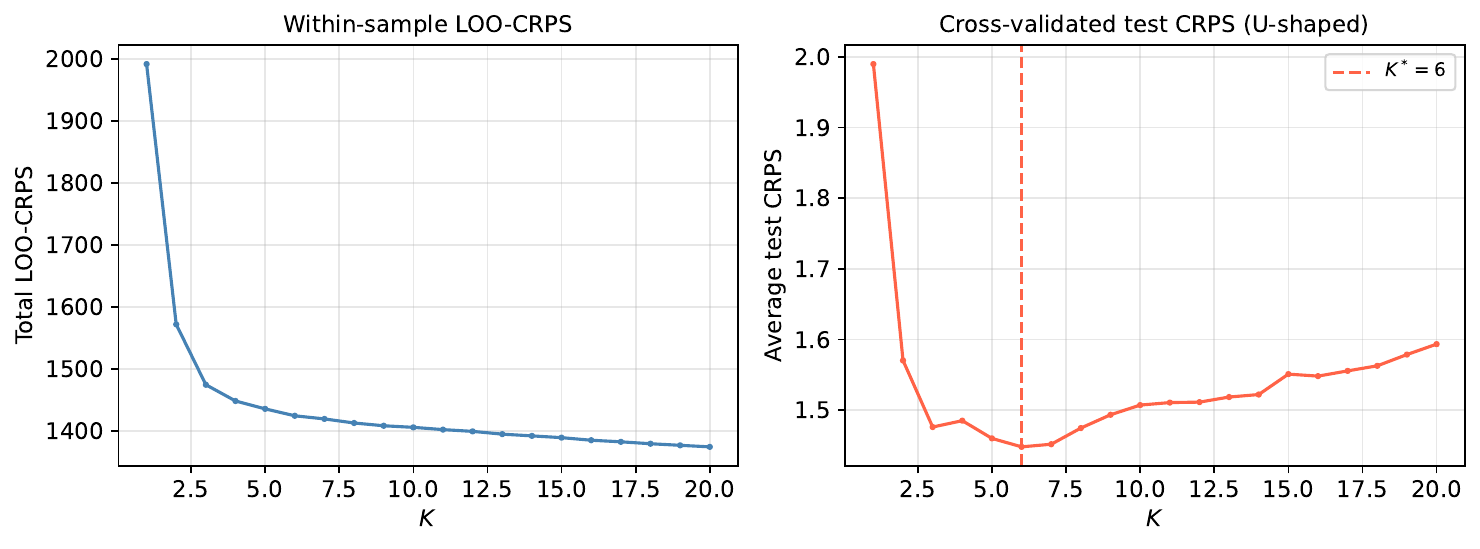}
\caption{Within-sample LOO-CRPS (left) and cross-validated test CRPS (right) as functions
of $K$ on the running example.
The within-sample criterion is nearly monotone decreasing, confirming its susceptibility to optimism bias
~\citep{HastieTibshiraniFriedman2009}.
The test CRPS has a clear U-shape with minimum at $K^*=6$.
Note that the left and right $y$-axis scales differ: the left shows a total (sum over all observations),
the right an average (per held-out test point).}
\label{fig:kselect}
\end{figure}

\section{Predictive Distributions: Conformal Inference}
\label{sec:pred}

Having selected $K^*$ and fitted the partition on all data, each test point $x^*$ falls
in some bin $B_k$ with $m$ training observations $y_1,\ldots,y_m$.
The partition enables two complementary predictive outputs: a \emph{Venn prediction
band}---a constant-width family of augmented ECDFs that is valid as a multiprobabilistic
prediction under exchangeability (derivation and figure in Appendix~\ref{app:venn})---and
a \emph{conformal prediction set}, which adapts to local density and is the focus of
this section.

\subsection{Conformal Prediction Set}
\label{sec:conformal-set}

We briefly recall the key concepts of conformal prediction~\citep{VovkEtAl2022} 
in the context of our setting and we show how to obtain valid prediction sets.

\paragraph{Nonconformity score.}
For test candidate $y_h$, define
\[
  \alpha(y_h) = \mathrm{CRPS}(\hat{F}_m,\, y_h)
  = \frac{1}{m}\sum_{i=1}^m |y_i - y_h| - \frac{W}{m^2},
\]
where $W = \sum_{i<j}|y_i - y_j|$ is the pairwise dispersion of the training bin
(independent of $y_h$).
For each training observation $y_j$ ($j = 1,\ldots,m$) in the augmented set
$\{y_1,\ldots,y_m,y_h\}$, define the leave-one-out nonconformity score
\[
  \alpha_j(y_h) = \mathrm{CRPS}(F^{(-j)}_{m+1},\, y_j),
\]
where $F^{(-j)}_{m+1}$ is the ECDF of $\{y_1,\ldots,y_m,y_h\}\setminus\{y_j\}$.
Write $\alpha_{m+1}(y_h) \equiv \alpha(y_h)$ for the test score.

\paragraph{Conformal p-value.}
\[
  p(y_h) = \frac{1}{m+1}\,\#\bigl\{j \in \{1,\ldots,m+1\} :
  \alpha_j(y_h) \ge \alpha(y_h)\bigr\}.
\]

\paragraph{Prediction set.}
\[
  \Gamma^\varepsilon = \{y_h \in \mathbb{R} : p(y_h) > \varepsilon\}.
\]

\begin{proposition}
\label{prop:coverage}
Under exchangeability of $(y_1,\ldots,y_m,y^*)$,\; $\Pr(y^* \in \Gamma^\varepsilon) \ge 1-\varepsilon$.
\end{proposition}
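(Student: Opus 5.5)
The plan is the standard full-conformal rank argument. The one subtlety is that the test score $\alpha(y_h)$ is computed with the ECDF of $y_1,\ldots,y_m$, which is exactly the leave-one-out ECDF of the augmented set with $y_h$ removed. So I will write $z_1,\ldots,z_{m+1}$ for the augmented bag $(y_1,\ldots,y_m,y_h)$ and define, for every index $j$,
\[
  \alpha_j = \mathrm{CRPS}\bigl(F^{(-j)}_{m+1},\, z_j\bigr),
\]
where $F^{(-j)}_{m+1}$ is the ECDF of $\{z_1,\ldots,z_{m+1}\}\setminus\{z_j\}$. With this definition $\alpha_{m+1}=\alpha(y_h)$, which justifies the paper's notation $\alpha_{m+1}(y_h)\equiv\alpha(y_h)$. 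The first step is to check that every $\alpha_j$ is produced by one symmetric rule,
\[
  \alpha_j = A\bigl(z_j,\, \{\!\{z_1,\ldots,z_{m+1}\}\!\}\bigr),
\]
which depends on the multiset of all values and on the value of the point being scored. This holds because the leave-one-out ECDF depends only on the multiset with one copy of $z_j$ removed. Ties need no special care: removing any copy of a repeated value leaves the same multiset.

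Next I will set $y_h=y^*$ and argue that the vector $(\alpha_1,\ldots,\alpha_{m+1})$ is exchangeable. Exchangeability of $(y_1,\ldots,y_m,y^*)$ means the joint law is invariant under permutations of the coordinates. The map from data to scores is permutation-equivariant, since permuting the $z$'s permutes the $\alpha$'s in the same way. Hence the score vector is itself exchangeable.

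The conclusion then follows from the usual rank lemma. For an exchangeable vector $(\alpha_1,\ldots,\alpha_{m+1})$, the quantity
\[
  p = \frac{1}{m+1}\,\#\{j : \alpha_j \ge \alpha_{m+1}\}
\]
satisfies $\Pr(p\le\varepsilon)\le\varepsilon$ for every $\varepsilon\in[0,1]$. To prove this, write $R_j = \#\{i:\alpha_i\ge\alpha_j\}$. If $p_j=R_j/(m+1)\le\varepsilon$, then $\alpha_j$ lies among the $\lfloor\varepsilon(m+1)\rfloor$ largest scores. Ties only increase the $R_j$, so at most $\lfloor\varepsilon(m+1)\rfloor$ indices can satisfy $p_j\le\varepsilon$. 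By exchangeability each index $j$ is equally likely to be the one with $p_j\le\varepsilon$, so averaging over $j$ gives $\Pr(p_{m+1}\le\varepsilon)\le\lfloor\varepsilon(m+1)\rfloor/(m+1)\le\varepsilon$. Since $y^*\in\Gamma^\varepsilon$ exactly when $p(y^*)>\varepsilon$, we get $\Pr(y^*\in\Gamma^\varepsilon)\ge 1-\varepsilon$.

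The main obstacle is the conditioning on the bin. The statement assumes exchangeability of the within-bin values $(y_1,\ldots,y_m,y^*)$, but the bin was chosen using all $n$ training points. For the argument to be fully rigorous in the transductive setting, the partition would have to be a symmetric function of the augmented data, or one would argue conditionally on the bin membership, Mondrian-style. I will prove the proposition as stated, under the hypothesis of within-bin exchangeability, and add a remark on this conditioning: coverage is then guaranteed conditionally on the taxonomy, as in Mondrian conformal prediction \citep{VovkEtAl2022}.
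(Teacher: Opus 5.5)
Your proposal is correct and follows the same route as the paper, which gives no argument of its own beyond citing the standard full-conformal validity result of \citet{VovkEtAl2022}. Your key observation is that $\alpha(y_h)$ is itself the leave-one-out score of the augmented bag at index $m+1$, so all $m+1$ scores come from one symmetric rule; this is exactly the justification in the paper's paragraph on the LOO structure, and your rank lemma (with ties settled by taking the index of smallest score in $\{j : R_j \le \lfloor\varepsilon(m+1)\rfloor\}$) and your caveat about the data-dependent partition match the paper's treatment, including its remark on exchangeability.
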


For a proof, refer to the standard conformal prediction literature~\citep{VovkEtAl2022}.


\paragraph{Remark on exchangeability.}
A data-dependent partition mildly violates the exchangeability condition underlying
Proposition~\ref{prop:coverage}: the DP uses all $y$-values to set boundaries, so
conditioning on a bin's membership acts as a $y$-dependent filter.
The violation is less severe as bin sizes grow, and Theorem 2.1 in ~\citep{AllenEtAl2025}
provide formal support: auto-calibrated within-bin predictors inherit conformal
guarantees even for data-dependent partitions.
Further discussion on exchangeability is in Appendix~\ref{app:exch}.



\paragraph{The suitability of LOO-CRPS as a nonconformity score}

Several properties make the LOO-CRPS a natural and principled nonconformity measure
in this setting.

\textit{Properness implies sensitivity to genuine nonconformity.}
Because CRPS is strictly proper, the expected score $\mathbb{E}_P[\mathrm{CRPS}(\hat{F},Y)]$
is uniquely minimised when $\hat{F} = P$.
Consequently, $\alpha(y_h) = \mathrm{CRPS}(\hat{F}_m, y_h)$ is large precisely when
$y_h$ is surprising under the within-bin training distribution, not merely when it is
far from some summary statistic such as the mean or median.
Nonconformity scores based on absolute residuals $|y_h - \hat{\mu}|$
(or quantile residuals) implicitly assume a parametric location or quantile model for the
bin; CRPS makes no such assumption, which matters when the within-bin distribution is
skewed or multimodal.

\textit{The LOO structure guarantees exchangeability.}
Conformal validity requires that the scores $\alpha_1(y_h), \ldots, \alpha_m(y_h),
\alpha(y_h)$ be \emph{exchangeable} under exchangeability of $(y_1,\ldots,y_m,y^*)$.
This holds here because every score is computed in exactly the same way: each $\alpha_j$
is the CRPS of the $m$-atom ECDF of the remaining $m$ elements of
$\{y_1,\ldots,y_m,y_h\}$ evaluated at $y_j$, and $\alpha(y_h)$ is the same
quantity for the test element.
If instead we used the full-sample $\hat{F}_m$ (without the LOO adjustment) to score
both training and test points, the training scores
$\mathrm{CRPS}(\hat{F}_m, y_j)$ and the test score $\mathrm{CRPS}(\hat{F}_m, y^*)$
would not be on equal footing: the training ECDF was fitted using $y_j$, but not
using $y^*$, breaking the symmetry on which conformal coverage rests.

\textit{Coherence with the binning criterion.}
The DP selects bin boundaries by minimising total LOO-CRPS on the training set.
The conformal step then evaluates the test point by the same quantity: the marginal
LOO-CRPS of adding $y^*$ to the selected bin.
This coherence means the bin is already optimised for the task of distributional
prediction, and the prediction set $\Gamma^\varepsilon$ has the natural interpretation
as the set of test values whose inclusion would not increase the bin's per-observation
LOO-CRPS beyond the level seen at the training points (see also Section~\ref{sec:conn}).
Figure~\ref{fig:pvalue} illustrates the p-value curve at three test locations.

\begin{figure}[h]
\centering
\includegraphics[width=\textwidth]{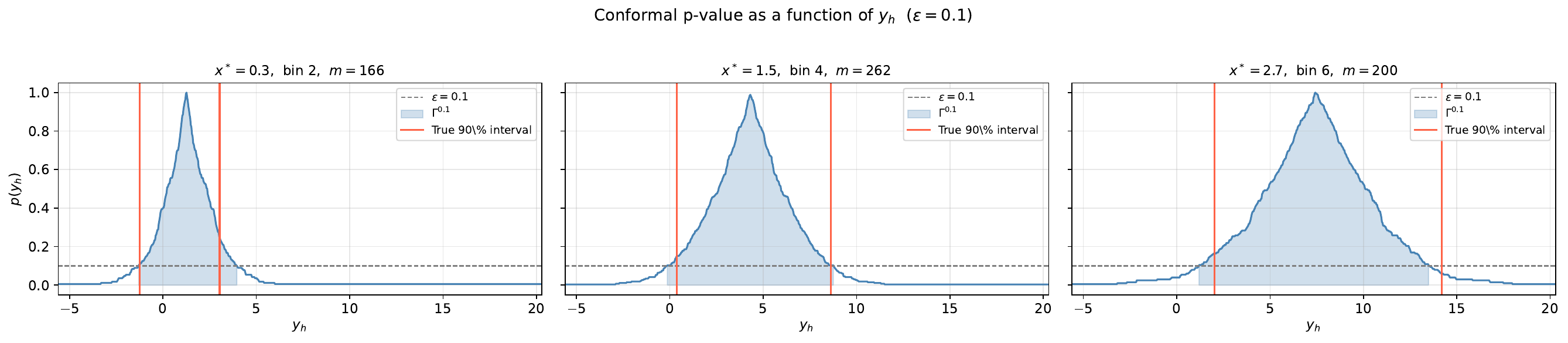}
\caption{Conformal p-value $p(y_h)$ as a function of the candidate response $y_h$ at
three test points $x^* \in \{0.3, 1.5, 2.7\}$ for $\varepsilon = 0.10$.
The shaded region is the prediction set $\Gamma^{0.10}$; vertical red lines mark the
true $90\%$ interval under $\mathcal{N}(3x^*,(1+x^*)^2)$.
The p-value curve is unimodal (each monotone piece is convex, since the underlying
nonconformity score $\alpha(y_h)$ is convex), yielding a single connected prediction set
in each case.}
\label{fig:pvalue}
\end{figure}

\paragraph{Computational advantages.}
CRPS appears unique among strictly proper scoring rules in admitting a closed-form
LOO sum computable in $O(\log n)$ per bin extension; log score and Brier score
are not known to admit analogous closed forms, and their straightforward evaluation
would require $O(m\log m)$ or more per bin extension, compromising the $O(n^2 \log n)$ precomputation.
CRPS also serves double duty as the conformal nonconformity score, aligning bin
selection with prediction-set construction.
A reformulation of the cross-validated criterion in terms of the Cram\'{e}r
distance---which generalises CRPS to distributional observations---is given
in Appendix~\ref{app:scoring}.

\subsection{Connection to the DP Cost}
\label{sec:conn}

The sum of all $m+1$ nonconformity scores in the augmented set equals the total
LOO-CRPS of that set:
\[
  \sum_{j=1}^{m+1} \alpha_j(y_h)
  = \mathrm{cost}\bigl(\{y_1,\ldots,y_m,y_h\}\bigr)
  = \frac{m+1}{m^2}\,W\!\bigl(\{y_1,\ldots,y_m,y_h\}\bigr).
\]
The test score $\alpha(y_h) = \mathrm{CRPS}(\hat{F}_m, y_h)$ measures how much $y_h$
increases the mean absolute deviation from the bin; it is large when $y_h$ is far from
the bulk of $y_1,\ldots,y_m$.
The prediction set $\Gamma^\varepsilon$ therefore consists of values whose addition
would not dramatically inflate the DP cost of bin $B_k$.

Figure~\ref{fig:fan} shows the resulting prediction intervals across the covariate range.

\begin{figure}[h]
\centering
\includegraphics[width=0.85\textwidth]{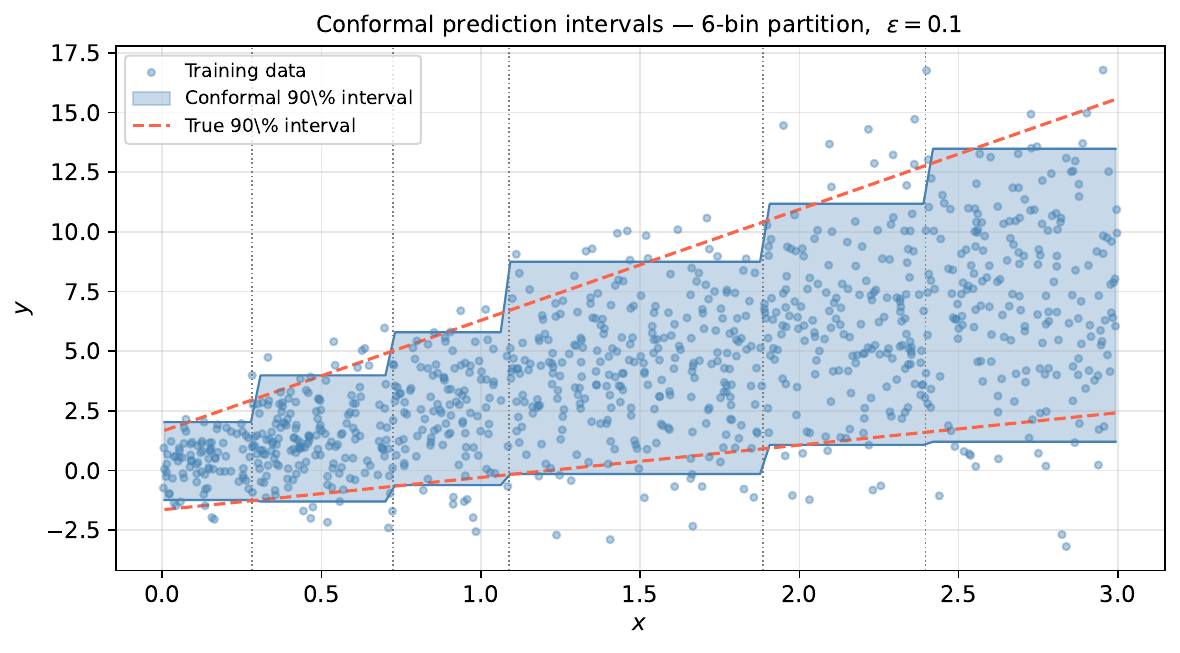}
\caption{Conformal $90\%$ prediction intervals (shaded blue) and true $90\%$ intervals
(dashed red) across the covariate range, with bin boundaries marked by dotted vertical lines.
The interval width adapts to the within-bin spread, tracking the increasing conditional
variance of the heteroscedastic data-generating process.}
\label{fig:fan}
\end{figure}

\subsection{Interval Structure and Non-Convex Extensions}
\label{sec:nonconvex}

\paragraph{$\Gamma^\varepsilon$ is approximately a single interval.}
The convexity of the CRPS nonconformity score is not incidental; it is a structural
consequence of measuring average distance to the training distribution.
Writing
\[
  \alpha(y_h)
  = \frac{1}{m}\sum_{i=1}^m |y_i - y_h| - \frac{W}{m^2},
\]
the second term is constant in $y_h$, and the first is a sum of convex functions
$|y_i - y_h|$, hence convex with a unique minimum at the empirical median of
$\{y_1,\ldots,y_m\}$.
More generally, any score of the form $\mathbb{E}_{\hat{F}_m}[\ell(y_h,Y)]$ with
$\ell$ convex in its first argument (e.g.\ squared error, absolute error, CRPS) inherits
this property.
Consequently, the sublevel set $\{y_h : \alpha(y_h) \le c\}$ is a closed interval
for every $c \ge 0$.

In a split-conformal predictor the training scores do not depend on $y_h$, so
$p(y_h)$ is non-increasing in $\alpha(y_h)$ and convexity of $\alpha$
directly implies that $\Gamma^\varepsilon$ is a connected interval.
In the present transductive (full-data) setting all $m+1$ scores depend on
$y_h$, and their relative ranking can in principle change as $y_h$ varies,
so the split-conformal argument does not apply directly.
We conjecture (and, to be clear, this remains an open problem) that 
$\Gamma^\varepsilon$ remains a connected interval whenever
$\alpha(y_h)$ is convex; in all our experiments (synthetic, Old Faithful,
motorcycle; $m$ ranging from $22$ to $262$) the prediction set is a single
interval without exception.

The consequence for multimodal bins is concrete.
Suppose the training responses in $B_k$ are bimodal with well-separated modes at
$\mu_1 \ll \mu_2$.
The empirical median lies in the inter-modal gap; $\alpha(y_h)$ attains its minimum
there.
The resulting $\Gamma^\varepsilon$ spans both modes and the low-density gap between
them: it is valid (marginal coverage is guaranteed) and correctly wide, but
informationally wasteful as it assigns coverage to a region of near-zero probability
mass.

\paragraph{A bandwidth-free non-convex alternative: the $k$-NN score.}
Non-contiguous prediction sets require a nonconformity score that is non-convex in
$y_h$, i.e.\ one that is simultaneously small near each mode and large in the
inter-modal gap.
Density-based scores such as $-\log \hat{f}(y_h)$ achieve this but require a bandwidth.
In one dimension, the $k$-nearest-neighbour ($k$-NN) distance provides a
bandwidth-free alternative.
Define
\[
  \alpha^{(k)}(y_h)
  = d_{(k)}\!\bigl(y_h,\;\{y_1,\ldots,y_m\}\bigr),
\]
the $k$-th smallest value of $|y_h - y_i|$ over $i = 1,\ldots,m$.
For $k=1$ this is simply $\min_i |y_h - y_i|$, which has a local minimum at every
training point and is large precisely where the training distribution is sparse.

The LOO version for the conformal construction is defined symmetrically in the augmented
set $\{y_1,\ldots,y_m,y_h\}$: for each training observation $y_j$,
\[
  \alpha^{(1)}_j(y_h)
  = \min\!\Bigl(\min_{i \ne j}|y_j - y_i|,\;|y_j - y_h|\Bigr),
\]
and $\alpha^{(1)}_{m+1}(y_h) \equiv \alpha^{(1)}(y_h)$.
Under exchangeability of $(y_1,\ldots,y_m,y^*)$, if we define as the conformal p-value
\[
  p^{(1)}(y_h)
  = \frac{1}{m+1}\,\#\bigl\{j : \alpha^{(1)}_j(y_h) \ge \alpha^{(1)}(y_h)\bigr\}
\]
then Proposition~2 holds without modification.

The prediction set $\Gamma^{\varepsilon,(1)} = \{y_h : p^{(1)}(y_h) > \varepsilon\}$ is
approximately the union
\[
  \bigcup_{i=1}^m\bigl[y_i - c_\varepsilon,\;y_i + c_\varepsilon\bigr],
\]
where $c_\varepsilon$ is the $(1-\varepsilon)$-quantile of the training LOO scores
(exact in the limit $m \to \infty$ where the $y_h$-dependence of $\alpha_j(y_h)$
vanishes).
For a bimodal distribution, training points cluster near both modes, so $c_\varepsilon$
is set by the intra-cluster spacing; if this is smaller than half the inter-modal gap,
the prediction set consists of two disjoint intervals.

The effective resolution is determined purely by the data; no bandwidth is specified
by the user.\footnote{It was pointed out to the author that indeed this adaptivity is
rooted in a classical result from
one-dimensional order statistics: the 1-NN distance is a consistent density estimator
without smoothing, since $m \cdot \min_i|y_h - Y_i|$ converges in distribution to
$\mathrm{Exp}(f(y_h))$ when $Y_1,\ldots,Y_m \sim f$~\citep{DevroyeGyorfi1985}.}
The conformal threshold $c_\varepsilon$ plays the role of a density threshold,
automatically calibrated for $1-\varepsilon$ coverage.
The resulting $\Gamma^{\varepsilon,(1)}$ approximates a non-parametric highest-density region
(HDR) of the within-bin empirical distribution.

\paragraph{Binning remains LOO-CRPS optimal.}
The $k$-NN modification applies only to the conformal step; the DP binning continues
to use the LOO-CRPS cost of Section~\ref{sec:cost}.
We argue that this separation is principled: the binning objective is to group covariate-sorted
observations so that the within-bin ECDF is a good predictive distribution for the
local conditional $P(Y \mid X = x)$, which is a question of predictive accuracy for
which LOO-CRPS is a natural criterion regardless of whether the within-bin
distribution is unimodal or multimodal.

Using $\sum_{j \in B_k} \alpha^{(1)}_j$ as the DP cost would be computationally
feasible (the cost is additive over bins, so optimal substructure is preserved), but
it would measure local $y$-density rather than predictive quality, and it would
exhibit the same optimism bias as within-sample LOO-CRPS: a size-2 bin with
$|y_1 - y_2| \approx 0$ has near-zero 1-NN cost, driving the DP to
over-partition.
The two steps therefore use different criteria for different purposes: LOO-CRPS for
reference-class selection, and either CRPS or $k$-NN for conformal evaluation.

\paragraph{Synthetic illustration.}
Figure~\ref{fig:bimodal_fan} illustrates the contrast on a full-regression bimodal
mixture DGP with $n=600$ training observations:
$Y \mid X{=}x \;\sim\; 0.5\,\mathcal{N}(x{-}1.5,\,0.3^2) + 0.5\,\mathcal{N}(x{+}1.5,\,0.3^2)$,
$X \sim \mathrm{Uniform}(0,6)$.
The two conditional modes run as parallel tracks separated by $3$ units across the
covariate range.
Cross-validation selects $K^*=6$ bins; both panels use the same partition.
The \emph{left} panel shows CRPS conformal intervals: for each bin the prediction set
is a single wide rectangle spanning both mode tracks and the low-density gap
between them.
The \emph{right} panel shows $k$-NN conformal intervals with $k^*=7$, selected by
cross-validating average prediction-set length: for each bin the set narrows to
intervals tracking each mode track, excluding the inter-modal gap.
Both scores carry the coverage guarantee; the visible narrowing in the
right panel corresponds to the efficiency gain from matching the prediction set to the
within-bin conditional support.

\begin{figure}[h]
\centering
\includegraphics[width=\textwidth]{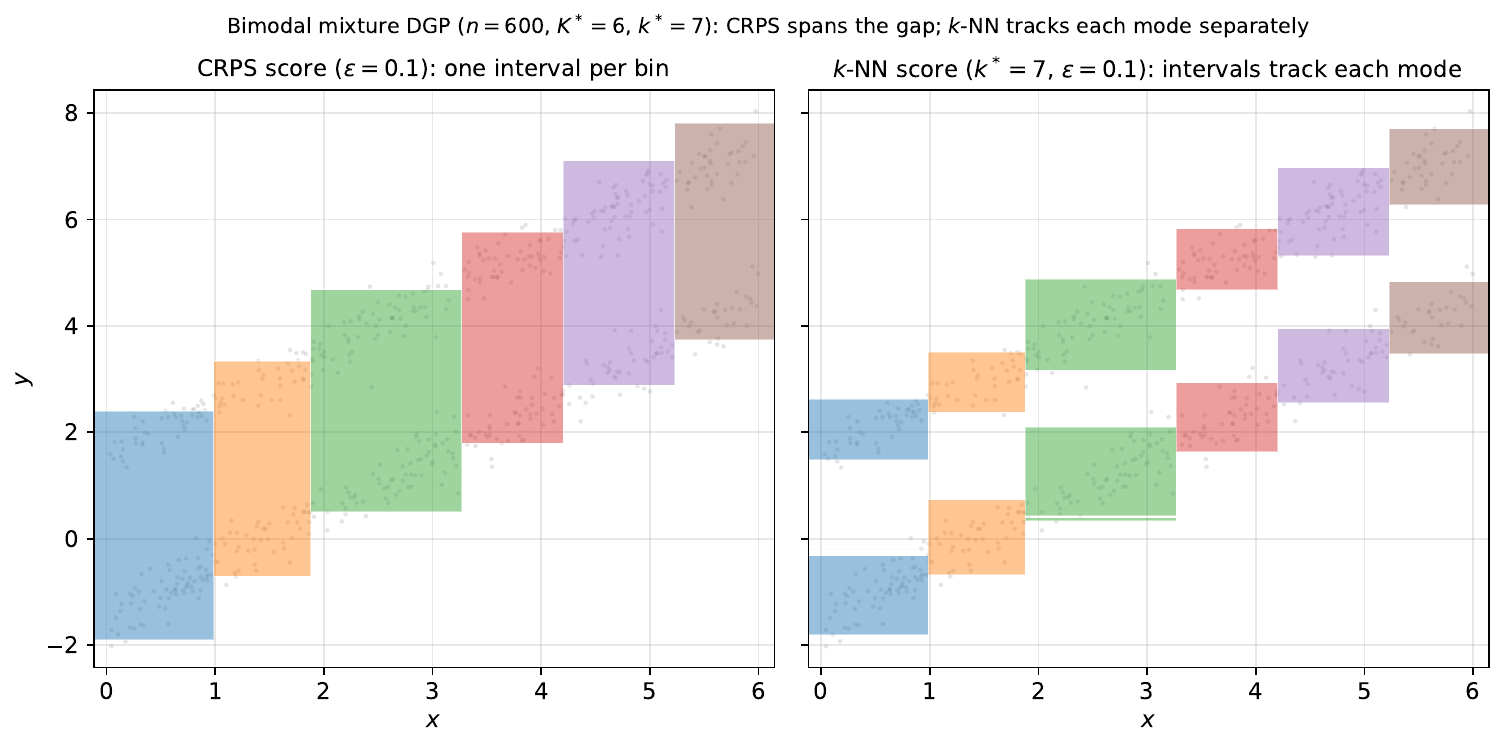}
\caption{Bimodal mixture DGP ($n=600$, $\varepsilon=0.10$, $K^*=6$).
  Shaded rectangles show the conformal prediction set for each bin;
  grey dots are training observations.
  \emph{Left:} CRPS score — one wide interval per bin, spanning both mode tracks.
  \emph{Right:} $k$-NN score ($k^*=7$, CV-selected) — narrow intervals tracking each
  mode; the low-density inter-modal gap is excluded.}
\label{fig:bimodal_fan}
\end{figure}

\subsection{Approximate Exchangeability and the Bias--Variance Trade-off in \texorpdfstring{$K$}{K}}
\label{sec:bias-variance}

Wide bins (small $K$) average over a heterogeneous region, hurting exchangeability;
narrow bins (large $K$) improve exchangeability but yield coarser intervals, since
the conformal p-value grid is $\{1/(m+1),\ldots,1\}$.
The CV criterion of Section~\ref{sec:K} penalises both failure modes and yields
a genuine U-shaped optimum (see Appendix~\ref{app:exch} for the full analysis).

\section{Real-Data Experiments}
\label{sec:realdata}

We evaluate the method on two real datasets that stress different aspects of
distributional non-stationarity: a bimodal conditional distribution and a
strongly heteroscedastic one.
All experiments use $\varepsilon = 0.10$ (nominal 90\% coverage).
Competitors are evaluated over $R=200$ random 50/50 train/calibration splits,
with standard errors reported throughout.

\paragraph{Competitors.}
\begin{itemize}
  \item \textbf{Gaussian split conformal.} OLS fit on the training half;
        absolute residuals on the calibration half determine the
        constant-width prediction interval.
  \item \textbf{CQR (cubic).} Conformalized Quantile Regression~\citep{RomanoEtAl2019}
        with cubic polynomial base quantile regressors ($\tau = 0.05$ and $0.95$),
        calibrated on the same held-out half.
  \item \textbf{CQR-QRF.} Quantile Regression Forest~\citep{Meinshausen2006} with 500
        trees fitted on the training half, conformalized with the
        $\max(q_{\alpha/2}(x)-y,\; y-q_{1-\alpha/2}(x))$ score (same as CQR)
        calibrated on the held-out half.
  \item \textbf{CQR-IDR.} Isotonic Distributional Regression~\citep{HenziEtAl2021}
        fitted on the training half, with quantile predictions at levels
        $\alpha/2$ and $1-\alpha/2$ conformalized via the same CQR score
        on the calibration half.
\end{itemize}
Our method is transductive: all $n$ observations are used for both partitioning
and conformal calibration (full-data conformal with the within-bin ECDF as
the nonconformity score), with no data held out.
Split-conformal competitors must reserve a calibration set, effectively halving
the sample available for model fitting.
This data-efficiency advantage is inherent to full-data conformal
methods~\citep{Barber2021}.
To disentangle the effect of the method itself from this sample-size advantage,
each table below reports two rows for our method: the \emph{full-$n$} row
reflects the method as deployed (transductive, no data splitting), while the
\emph{$n/2$} row handicaps it to the same training set available to competitors,
providing a controlled comparison.

\subsection{Old Faithful: bimodal conditional distribution}

The \texttt{faithful} dataset ($n = 272$) records eruption duration and
waiting time between eruptions of the Old Faithful geyser.
We pick $x = \text{waiting time (min)}$ as the predictor and
$y = \text{eruption duration (min)}$ as the response.
The marginal distribution of eruption duration is bimodal:
short eruptions ($\approx 2$ min) and long eruptions ($\approx 4.5$ min),
with the mixture proportion shifting as a function of waiting time.
This is a natural stress-test for methods that assume a unimodal conditional distribution.

Cross-validation selects $K^* = 4$, with bin boundaries at
waiting times $63.0$, $67.5$, and $71.5$ minutes.
Figure~\ref{fig:faithful_partition} (left) shows the partition;
Figure~\ref{fig:faithful_partition} (right) shows the within-bin ECDFs.
The two outer bins have clearly unimodal within-bin distributions (short
eruptions for short waits, long eruptions for long waits), while the two
inner bins capture the transition region with finer resolution.

Figure~\ref{fig:faithful_intervals} compares 90\% prediction intervals across
methods.
Gaussian split conformal and CQR produce intervals of roughly constant width
across all waiting times, whereas our method adapts: narrower in both
regimes where the conditional distribution is concentrated, and wide only in
the transition region where the within-bin ECDF spans both modes.

Table~\ref{tab:faithful_coverage} reports coverage and mean interval width
averaged over $R=200$ random 50/50 splits.
In the matched-sample comparison (top block), our method ($n/2$, italic row)
achieves mean width $1.27$ min with slightly sub-nominal coverage
($88.5 \pm 0.3\%$), narrower than all competitors: Gaussian split conformal
($1.68$ min), CQR ($1.49$ min), CQR-IDR ($1.29$ min), and CQR-QRF ($1.33$ min).
When all $n$ observations are used (bold row, below the line), our method achieves
$90.6 \pm 0.1\%$ coverage with mean width $1.22$ min.
All split-conformal methods exceed nominal coverage ($91.2$--$91.4\%$).

\begin{figure}[ht]
\centering
\includegraphics[width=\textwidth]{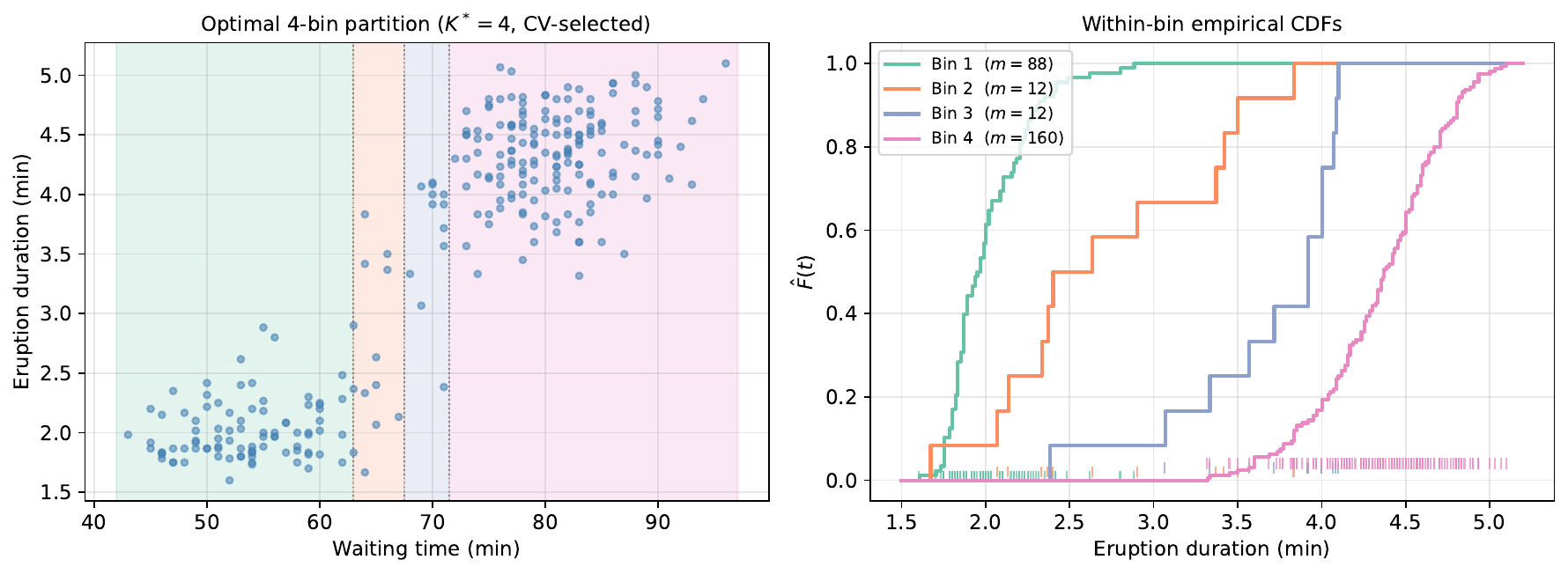}
\caption{Old Faithful. \emph{Left:} scatter of eruption duration vs.\ waiting
  time with the optimal 4-bin partition.
  Boundaries at $63.0$, $67.5$, and $71.5$ minutes resolve the transition
  between the short-eruption and long-eruption regimes.
  \emph{Right:} within-bin empirical CDFs; the outer bins are unimodal,
  confirming the partition captures the regime structure.}
\label{fig:faithful_partition}
\end{figure}

\begin{figure}[ht]
\centering
\includegraphics[width=\textwidth]{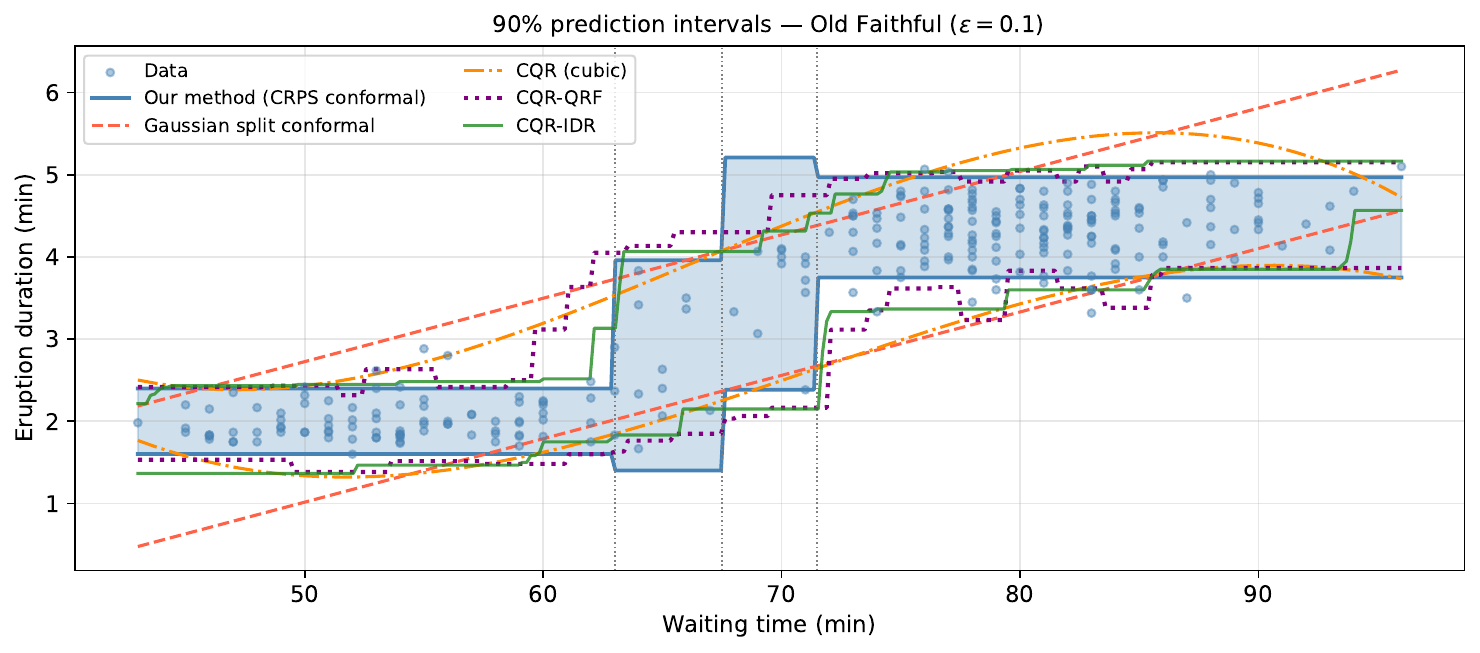}
\caption{Old Faithful: 90\% prediction intervals from five methods.
  Our CRPS-based conformal intervals adapt to the local conditional spread
  within each bin, producing narrower intervals than all competitors in both regimes.
  In the transition region around $67$--$70$ minutes, the proposed method's interval
  is deliberately wide: the single bin spanning both eruption modes must cover the
  full bimodal spread, an honest consequence of the partition geometry.}
\label{fig:faithful_intervals}
\end{figure}

\begin{table}[ht]
\centering
\begin{tabular}{lcc}
\hline
Method & Coverage (\%) & Mean width (min) \\
\hline
\textit{Our method ($n/2$)} & $88.5 \pm 0.3$ & $1.270 \pm 0.010$ \\
Gaussian split conformal & $91.2 \pm 0.0$ & $1.683 \pm 0.006$ \\
CQR (cubic) & $91.2 \pm 0.0$ & $1.490 \pm 0.006$ \\
CQR-QRF & $91.4 \pm 0.0$ & $1.333 \pm 0.006$ \\
CQR-IDR & $91.4 \pm 0.0$ & $1.294 \pm 0.007$ \\
\hline
\textbf{Our method (full $n$)} & $90.6 \pm 0.1$ & $1.217 \pm 0.002$ \\
\hline
\end{tabular}
\caption{Old Faithful ($n=272$): empirical coverage and mean width of
  prediction intervals at nominal level $1-\varepsilon = 0.90$,
  averaged over $R=200$ random 50/50 splits ($\pm$ one standard error).
  \emph{Top block}: all methods use the same training half ($n/2$);
  \textit{Our method ($n/2$)} is directly comparable to the competitors.
  \emph{Below the line}: \textbf{Our method (full $n$)} uses all $n$
  observations, a design advantage inherent to full-data conformal methods.}
\label{tab:faithful_coverage}
\end{table}

\subsection{Motorcycle accident: heteroscedastic benchmark}

The \texttt{mcycle} dataset ($n = 133$) records head acceleration of a
motorcycle dummy as a function of time after a simulated impact.
The response variance changes dramatically: near-zero before
$\approx 15$ ms, explosive in the $15$--$30$ ms deformation phase,
and moderating thereafter.
This is the standard benchmark for heteroscedastic prediction intervals
in the nonparametric regression literature.

Cross-validation selects $K^* = 6$, with boundaries at $15.1$, $17.6$,
$24.4$, $27.2$, and $38.0$ ms, clustered in the high-variance impact region.
Figure~\ref{fig:mcycle_partition} shows the K-selection curve and the
resulting partition.
With $n = 133$ observations, $K^* = 6$ implies bins of roughly
$10$--$30$ observations; the narrower bins in the chaotic $15$--$30$ ms
window improve within-bin exchangeability at the cost of fewer ECDF atoms.
This is the small-sample manifestation of the bias--variance trade-off
discussed in Section~\ref{sec:bias-variance}: the CV criterion selects
the best available tradeoff, not a guaranteed minimum bin size, and the
coarseness of the within-bin ECDF is absorbed into interval width rather
than miscoverage.

Figure~\ref{fig:mcycle_intervals} and Table~\ref{tab:mcycle_coverage}
compare prediction intervals averaged over $R=200$ random 50/50 splits.
In the matched-sample comparison (top block), Gaussian split conformal is
drastically inefficient ($172.4$ g), CQR (cubic) and CQR-IDR are moderately
better ($134.1$ g and $127.6$ g respectively), and CQR-QRF adapts most
flexibly ($87.9$ g).
Our method on the same training half (italic row, $n_{\rm tr} = 66$)
yields mean width $100.6$ g with sub-nominal coverage ($86.9 \pm 0.5\%$);
at this small sample size, CQR-QRF achieves narrower intervals, reflecting
the bias--variance trade-off discussed in Section~\ref{sec:bias-variance}:
fewer observations per bin produce a coarser within-bin ECDF and wider intervals.
When all $n$ observations are used (bold row, below the line), our method
achieves $90.3 \pm 0.2\%$ coverage with mean width $77.3$ g, narrower than
all competitors.
All split-conformal methods exceed nominal coverage ($92.5$--$93.1\%$).

\begin{figure}[ht]
\centering
\includegraphics[width=\textwidth]{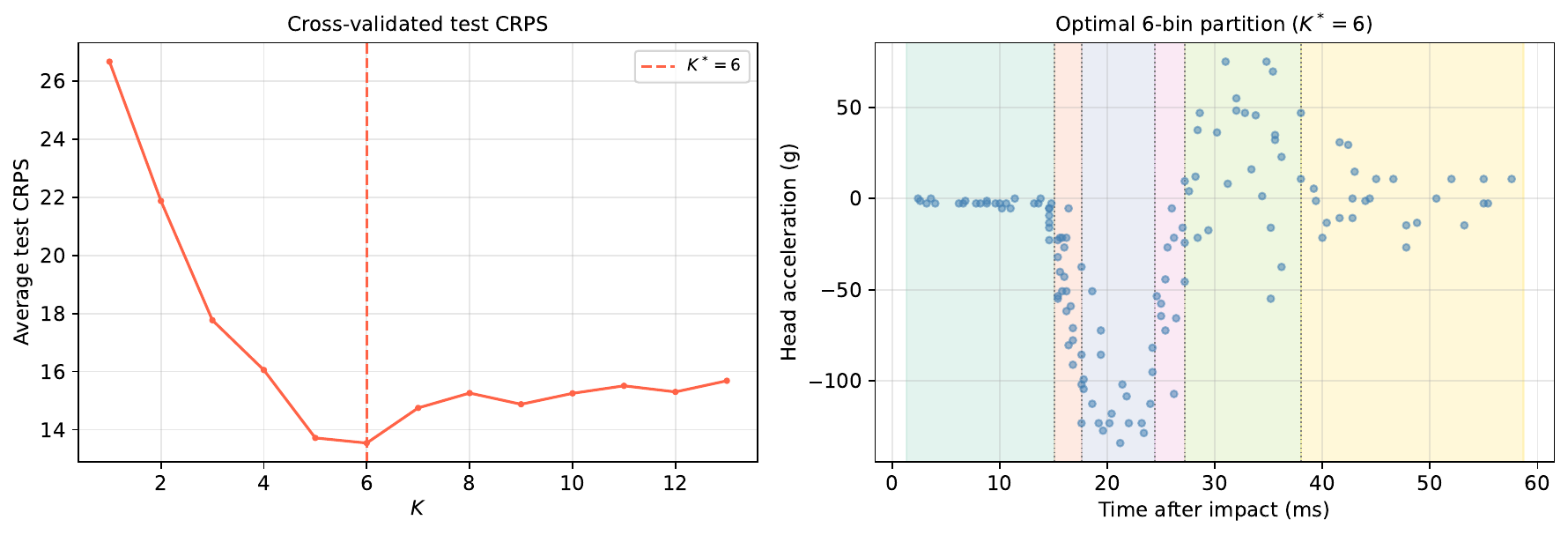}
\caption{Motorcycle accident. \emph{Left:} cross-validated test CRPS as a
  function of $K$ with the selected $K^* = 6$.
  \emph{Right:} scatter with the 6-bin partition; boundaries are
  concentrated in the high-variance impact phase.}
\label{fig:mcycle_partition}
\end{figure}

\begin{figure}[ht]
\centering
\includegraphics[width=\textwidth]{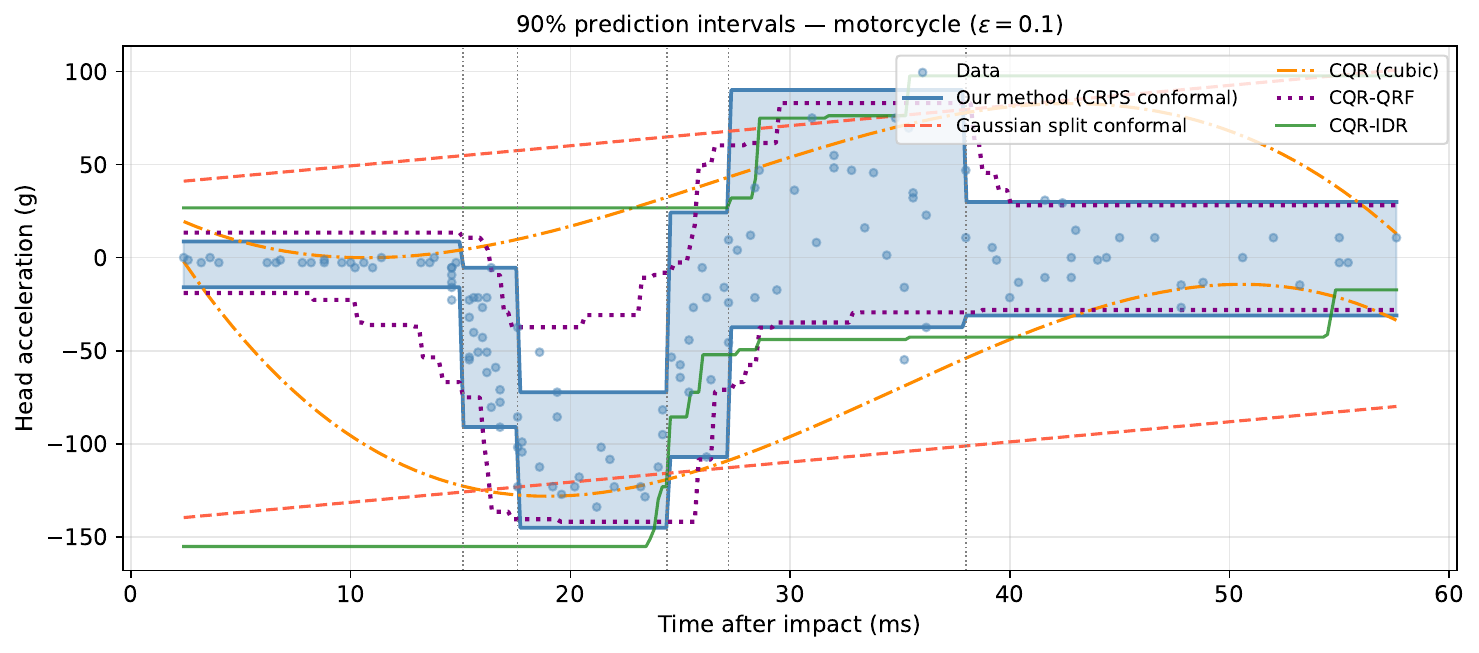}
\caption{Motorcycle accident: 90\% prediction intervals.
  Gaussian split conformal is constrained to constant width; our method,
  CQR-QRF, and CQR-IDR adapt to the non-stationary variance structure.
  A small number of data points in the high-variance impact phase ($20$--$30$ ms)
  fall outside the proposed method's intervals, consistent with the
  approximate within-bin exchangeability in this small and strongly heteroscedastic dataset.}
\label{fig:mcycle_intervals}
\end{figure}

\begin{table}[ht]
\centering
\begin{tabular}{lcc}
\hline
Method & Coverage (\%) & Mean width (g) \\
\hline
\textit{Our method ($n/2$)} & $86.9 \pm 0.5$ & $100.6 \pm 1.3$ \\
Gaussian split conformal & $92.5 \pm 0.0$ & $172.4 \pm 1.0$ \\
CQR (cubic) & $92.5 \pm 0.0$ & $134.1 \pm 1.5$ \\
CQR-QRF & $93.1 \pm 0.1$ & $87.9 \pm 0.7$ \\
CQR-IDR & $92.7 \pm 0.0$ & $127.6 \pm 0.6$ \\
\hline
\textbf{Our method (full $n$)} & $90.3 \pm 0.2$ & $77.3 \pm 0.2$ \\
\hline
\end{tabular}
\caption{Motorcycle accident ($n=133$): empirical coverage and mean width of
  prediction intervals at nominal level $1-\varepsilon = 0.90$,
  averaged over $R=200$ random 50/50 splits ($\pm$ one standard error).
  \emph{Top block}: all methods use the same training half ($n/2$);
  \textit{Our method ($n/2$)} is directly comparable to the competitors.
  \emph{Below the line}: \textbf{Our method (full $n$)} uses all $n$
  observations, a design advantage inherent to full-data conformal methods.}
\label{tab:mcycle_coverage}
\end{table}

\section{Additional considerations}
\label{sec:additional}
\paragraph{Limitations.}
In terms of scalability, it is important to note that the DP binning step has 
$O(n^2 K)$ time complexity and $O(n^2)$ storage requirement, which may be 
prohibitive for large datasets.
The method applies only to the case of a one-dimensional covariate and a contiguous binning structure.
In small samples the CV criterion can select a large $K$, resulting in bins with
few observations and hence coarse prediction intervals; this is a feature of the
bias-variance tradeoff under limited data, not a failure of validity.

\paragraph{Multi-dimensional extension.}
The LOO-CRPS cost $W(S)\cdot m/(m-1)^2$ can in principle be extended to multivariate data, but DP tractability
relies on the total order induced by sorting a one-dimensional $x$.
One could think of two natural extensions for $d > 1$.

The first is projection onto a learned one-dimensional index $g:\mathbb{R}^d\to\mathbb{R}$,
sorting by $g(x)$ and running the existing DP.
The partition is globally optimal \emph{given} $g$; alternating optimisation
over $(g, \text{partition})$ is tractable, but global optimality in $g$ is not
guaranteed.
An open question is whether the data-adaptive $g$ minimising LOO-CRPS recovers
a form of sufficient dimension reduction for the conditional \emph{distribution}
$P(Y\mid X=x)$, rather than merely the conditional mean.

The second is CART-style greedy binary splitting, requiring $O(dn)$ evaluations
per split and $O(Kdn)$ total.
No projection is needed and regions are interpretable rectangular cells, but global
optimality is again not guaranteed.

\paragraph{Open problems.}
At least three questions remain unresolved.
First, whether CRPS coherence (same rule for binning and scoring) can be proved to
yield tighter coverage bounds or more efficient prediction sets relative to a
mismatched pair.
Second, the monotone-coverage question: as $K$ increases, conditional coverage at
a fixed $x^*$ should improve; making this precise requires bounding the
exchangeability violation as a function of bin width and local DGP smoothness.
Third, the consistency of $K^*$: does
$K^*=\operatorname*{arg\,min}_{K}\overline{\mathrm{TestCRPS}}(K)$
converge to a meaningful oracle as $n\to\infty$, and under what conditions does
the population $\overline{\mathrm{TestCRPS}}(K)$ have a unique minimum?

\section{Conclusion}
\label{sec:conclusion}

We have presented a method for non-parametric conditional distribution estimation
that combines three independently motivated ideas into a coherent pipeline: optimal
bin-boundary placement by LOO-CRPS minimisation, cross-validated bin-count selection,
and conformal prediction using the within-bin ECDF as both the predictive distribution
and the nonconformity score.

The central technical contribution is the closed-form cost
$\mathrm{cost}(S) = mW/(m-1)^2$, which reduces the LOO-CRPS of any bin to a single
pairwise-dispersion scalar and enables exact globally optimal partitioning in
$O(n^2 K)$ time via dynamic programming.

On the predictive side, convexity of the CRPS nonconformity score guarantees
contiguous prediction intervals in the split-conformal case; in our transductive
setting, single-interval structure is observed in all experiments.
The $k$-NN score provides a bandwidth-free alternative that yields
non-contiguous highest-density regions for multimodal within-bin distributions.
The two scores are therefore complementary: CRPS for efficiency in unimodal regimes,
$k$-NN for distributional fidelity when the within-bin response is multimodal.

On the real datasets, the full-$n$ method (which uses all observations, unlike
competitors that reserve half for calibration) is competitive or superior in interval
efficiency: $11$--$40\%$ narrower intervals on Old Faithful (bimodal conditional) and
$2.2\times$ narrower than Gaussian split conformal on the motorcycle benchmark
(strongly heteroscedastic), while maintaining near-nominal coverage in both cases.
A matched-sample comparison (restricting our method to the same training half as the
competitors) shows the cost of halving the data: on Old Faithful the method remains
narrower than all competitors; on the motorcycle dataset ($n/2=66$, coverage
$86.9 \pm 0.5\%$) CQR-QRF achieves modestly narrower intervals ($87.9$ vs $100.5$ g),
reflecting the small-sample bias-variance regime.

\section*{Acknowledgements}

The author acknowledges Prof. Alexander Gammerman for his helpful guidance and Matteo Fontana 
for providing pointers to relevant prior work. 
The author used Claude (Anthropic) as a writing and coding assistant during the
preparation of this manuscript.
Claude assisted with drafting and editing prose, implementing experiment scripts,
and making \LaTeX{} edits.
The method, proofs, experimental design, and all intellectual content are the
author's own.

\section*{Software and Reproducibility}

The method is implemented in the \texttt{crpsconfreg} Python package,
available at \url{https://pypi.org/project/crpsconfreg}.
All figures and numerical results in this paper can be reproduced using the
scripts in the accompanying repository at \url{https://github.com/ptocca/crps-conformal-regression}.
An interactive browser demo (no installation required) is hosted at
\url{https://ptocca.github.io/crps-conformal-regression/}.

\bibliography{../references}

\appendix

\section{Proofs}
\label{app:proofs}

\subsection*{Proof of Proposition~\ref{prop:cost}}

Sum the per-observation expression over $k \in S$.
For the first term,
\[
  \sum_{k=1}^m \frac{d_k}{m-1}
  = \frac{1}{m-1}\sum_{k=1}^m d_k
  = \frac{D}{m-1},
\]
since $\sum_k d_k = \sum_k \sum_{\ell\ne k}|y_\ell - y_k| = D$.
For the second term, note that removing $k$ from the pairwise sum gives
$\sum_{\ell\ne r,\,\ell,r\ne k}|y_\ell-y_r| = D - 2d_k$, so
\[
  \sum_{k=1}^m \frac{D - 2d_k}{2(m-1)^2}
  = \frac{mD - 2D}{2(m-1)^2}
  = \frac{D(m-2)}{2(m-1)^2}.
\]
Subtracting and simplifying,
\[
  \mathrm{cost}(S)
  = \frac{D}{m-1} - \frac{D(m-2)}{2(m-1)^2}
  = \frac{mD}{2(m-1)^2}
  = \frac{mW}{(m-1)^2}.
\]
\noindent\hfill$\square$

%

\section{Algorithm Pseudocode and Implementation Details}
\label{app:impl}
\paragraph {Pseudocode.}
The pseudocode in Algorithm~\ref{alg:dp} on page~\pageref{alg:dp} implements the DP algorithm for computing 
the CRPS-optimal $K$-partition.

\begin{algorithm2e}[htp]
\caption{CRPS-optimal $K$-partition via dynamic programming}
\label{alg:dp}
\KwIn{Sorted observations $(x_{(1)},y_1),\ldots,(x_{(n)},y_n)$; number of bins $K$}
\KwOut{Optimal bin boundaries $0 = b_0 < b_1 < \cdots < b_K = n$}
\BlankLine
\textbf{Phase 1: Precompute cost matrix}\;
\For{$i = 1$ \KwTo $n$}{
  Initialise Fenwick trees $T_{\mathrm{cnt}}$, $T_{\mathrm{sum}}$; running total $\Sigma \gets 0$; $W \gets 0$\;
  \For{$j = i$ \KwTo $n$}{
    Insert $y_j$ into $T_{\mathrm{cnt}}$ and $T_{\mathrm{sum}}$; update $\Sigma \gets \Sigma + y_j$\;
    $r \gets T_{\mathrm{cnt}}.\mathrm{prefixQuery}(y_j)$;\quad
    $S_{\le} \gets T_{\mathrm{sum}}.\mathrm{prefixQuery}(y_j)$;\quad
    $S_{>} \gets \Sigma - S_{\le}$\;
    $W \gets W + y_j \cdot r - S_{\le} + S_{>} - y_j \cdot (j - i + 1 - r)$\;
    $m \gets j - i + 1$\;
    $c[i][j] \gets \begin{cases} m\,W/(m-1)^2 & \text{if } m \ge 2 \\ +\infty & \text{if } m = 1 \end{cases}$\;
  }
}
\BlankLine
\textbf{Phase 2: Fill DP table}\;
\For{$j = 2$ \KwTo $n$}{
  $\mathrm{dp}[1][j] \gets c[1][j]$;\quad $\mathrm{split}[1][j] \gets 0$\;
}
\For{$k = 2$ \KwTo $K$}{
  \For{$j = k$ \KwTo $n$}{
    $\mathrm{dp}[k][j] \gets +\infty$\;
    \For{$i = k-1$ \KwTo $j-1$}{
      $v \gets \mathrm{dp}[k-1][i] + c[i+1][j]$\;
      \If{$v < \mathrm{dp}[k][j]$}{
        $\mathrm{dp}[k][j] \gets v$;\quad $\mathrm{split}[k][j] \gets i$\;
      }
    }
  }
}
\BlankLine
\textbf{Phase 3: Backtrack boundaries}\;
$b_K \gets n$;\quad $k \gets K$;\quad $j \gets n$\;
\While{$k \ge 1$}{
  $b_{k-1} \gets \mathrm{split}[k][j]$;\quad $j \gets b_{k-1}$;\quad $k \gets k - 1$\;
}
\KwRet{$(b_0, b_1, \ldots, b_K)$}\;
\end{algorithm2e}

\paragraph{Choice of data structure.}
Two Fenwick trees~\citep{Fenwick1994} suffice: one storing counts (to obtain $r$)
and one storing values (to obtain $S_{\le}$).
A sorted array answers prefix queries in $O(1)$ but requires $O(m)$ per insertion,
raising total precomputation to $O(n^3)$.
A balanced BST achieves the same $O(\log n)$ per operation with higher constant
factors; the Fenwick tree is therefore the natural choice: simple, cache-friendly,
and sufficient for the $O(n^2\log n)$ bound.

\section{Tightness of \texorpdfstring{$O(n^2 K)$}{O(n\textasciicircum{}2 K)}: Quadrangle Inequality Counterexample}
\label{app:qi}

If $c(i,j)$ satisfied the \emph{quadrangle inequality} (QI),
$c(a,c)+c(b,d)\le c(a,d)+c(b,c)$ for all $a\le b\le c\le d$,
then by the Knuth--Yao theorem~\citep{Knuth1971,Yao1980} optimal split points
would be monotone in $j$, enabling $O(n)$ per DP row and total $O(nK)$.
However, the LOO-CRPS cost violates the QI already at $n=4$:
for $\mathbf{y}=(0,0,0,0,0,1,1,1,1,1)$ and $(a,b,c,d)=(1,4,5,6)$
the inequality fails by a gap of $0.30$.
One could view this as a consequence of the $m/(m-1)^2$ prefactor which makes the 
cost disproportionately sensitive to small bins.

\section{Venn Prediction Band}
\label{app:venn}

For test point $y^*$ in bin $B_k$ with $m$ training responses $y_1,\ldots,y_m$,
the Venn prediction is the family of augmented ECDFs
\[
  F_h(t) = \frac{1}{m+1}\Bigl(\#\{i : y_i \le t\} + \mathbf{1}[y_h \le t]\Bigr),
  \qquad y_h \in \mathbb{R}.
\]
Each $F_h$ is a valid CDF; the family spans all distributions consistent with one
additional observation in the bin.
At each $t$, the band covers
\[
  \underline{F}(t) = \frac{\#\{i : y_i \le t\}}{m+1},
  \qquad
  \overline{F}(t) = \underline{F}(t) + \frac{1}{m+1},
\]
a constant width of $1/(m+1)$ everywhere.
In terms of the training ECDF $\hat{F}_m$:
$\underline{F}(t) = \tfrac{m}{m+1}\hat{F}_m(t)$,\;
$\overline{F}(t) = \underline{F}(t) + \tfrac{1}{m+1}$.
Under exchangeability, this construction gives a valid multiprobabilistic prediction
in the sense of Vovk et al.

Figure~\ref{fig:venn} shows the band for each bin of the running example.
The width $1/(m+1)$ is determined entirely by bin size and carries no information
about the local density or the position of $y^*$ within the bin.

\begin{figure}[h]
\centering
\includegraphics[width=\textwidth]{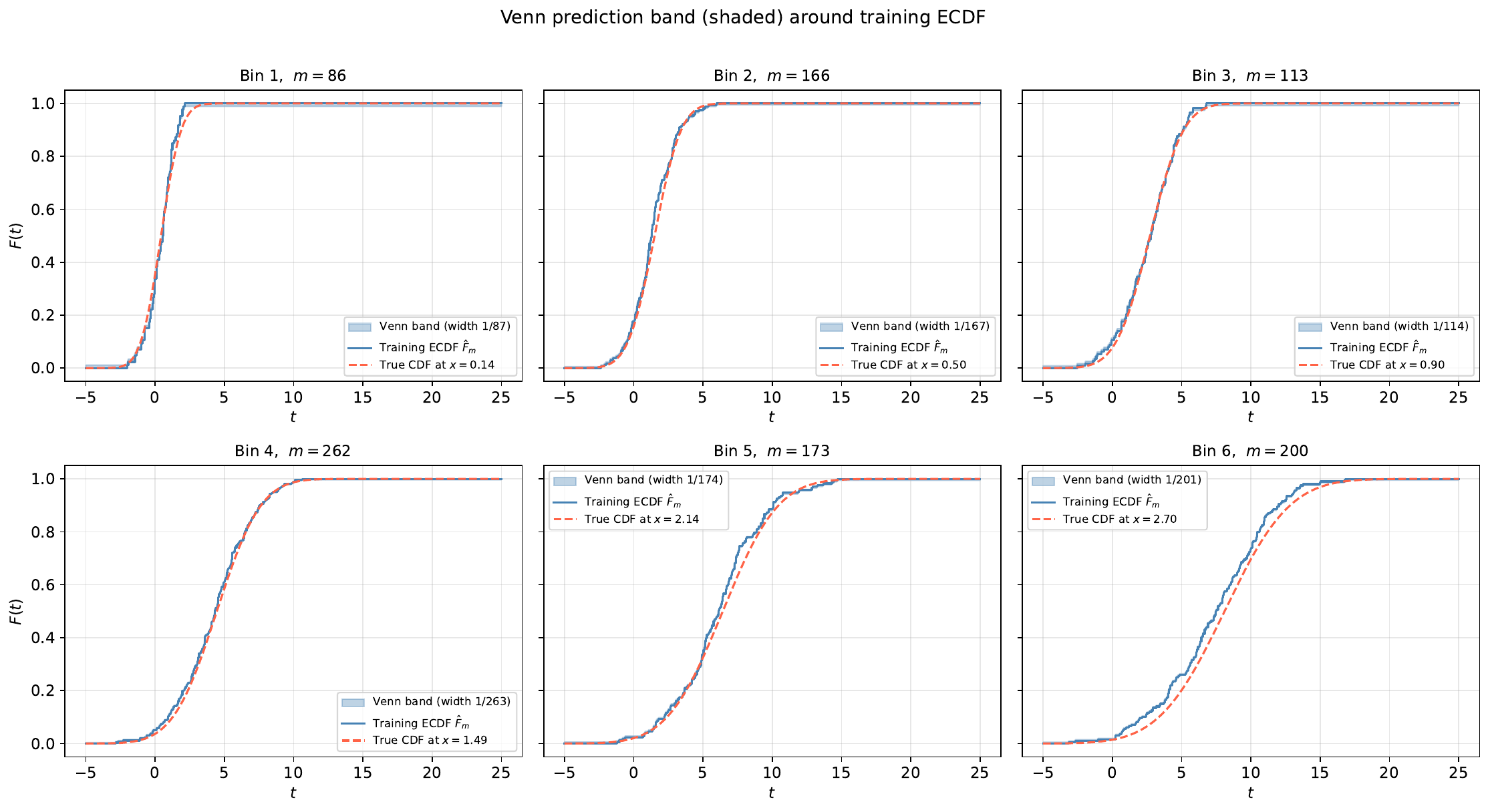}
\caption{Venn prediction band (shaded) and training ECDF $\hat{F}_m$ (step function)
for each of the six bins, alongside the true conditional CDF at the bin midpoint (dashed red).
The band width $1/(m+1)$ ranges from $0.004$ to $0.012$, invisible at these bin sizes.}
\label{fig:venn}
\end{figure}

\section{Exchangeability Analysis and Bias--Variance Trade-off}
\label{app:exch}

\subsection*{Exchangeability analysis}

Conformal validity (Proposition~\ref{prop:coverage}) requires two conditions:
\begin{enumerate}
  \item \textbf{Score symmetry.}
    For any fixed set of $m+1$ values in a bin, any permutation of
    $(y_1,\ldots,y_m,y^*)$ produces the same multiset of LOO nonconformity scores.
    This is a property of the LOO-CRPS computation and always holds.
  \item \textbf{Statistical exchangeability of bin members.}
    Under i.i.d.\ sampling, the $m+1$ values in the bin must be exchangeable
    as random variables, which requires that bin membership does not introduce
    dependence among the $y$-values.
\end{enumerate}

Condition~(2) is mildly violated by a data-dependent partition.
The DP uses all $y$-values to set boundaries, so conditioning on a bin's membership
acts as a $y$-dependent filter: observations ended up together partly
\emph{because} their $y$-values were compatible with the DP's cost criterion.

A concrete thought experiment makes this vivid.
Consider an observation near a bin boundary.
If its $y$-value were very different from its bin-mates, the DP might have placed
the boundary on the other side.
Conditioning on it being in \emph{this} bin therefore biases its $y$-distribution
toward values compatible with the present companions.

We argue informally that the violation is mild in practice: it is not in the score computation
(permutation-invariant for any fixed bin) but in the probability model over which
observations end up together.
For the partition to change, a single observation must alter the DP cost matrix
enough to shift a boundary --- increasingly unlikely as bin sizes grow.
\citet{AllenEtAl2025} provide formal support: auto-calibrated
within-bin predictors inherit conformal guarantees even for data-dependent partitions.

\subsection*{Bias--variance trade-off in $K$}

Wide bins (small $K$) average over a heterogeneous $x$-range, violating
exchangeability; the resulting conformal intervals may be mis-sized at a
specific $x^*$ --- conservative where local spread is below the bin average,
anti-conservative where it is above.

Narrow bins (large $K$) improve exchangeability but at two distinct costs.
First, the conformal p-value lies on the grid $\{1/(m+1),\ldots,1\}$,
so for $m < \lceil 1/\varepsilon\rceil - 1$ no point can be excluded at level
$\varepsilon$ and the prediction set is the whole real line.
At $\varepsilon=0.10$ this floor falls at $m < 9$.
Second, with small $m$ the conformal quantile is estimated from few scores,
so interval widths are highly variable across test points.

The CV criterion penalises both failure modes:
over-split partitions incur high test CRPS because the within-bin ECDF has too few
atoms, while under-split partitions incur high test CRPS because the ECDF averages
over too heterogeneous a region.
The U-shaped $\overline{\mathrm{TestCRPS}}(K)$ is therefore a genuine empirical optimum.

\section{Numerical Illustration}
\label{app:example}

We illustrate further the running example introduced in the main text in Section~\ref{sec:K}:
$n = 1000$,
$Y \mid X = x \sim \mathcal{N}(3x,(1+x)^2)$, $X \sim \mathrm{Uniform}(0,3)$.

\paragraph{Cross-validated $K$ selection.}
The 5-fold CV procedure selects $K^* = 6$; the CV curves are shown in
Figure~\ref{fig:kselect} and the resulting partition in Figure~\ref{fig:partition}.
The bin boundaries and sizes are listed below.

\paragraph{Conformal prediction intervals.}
Table~\ref{tab:ci} reports the $90\%$ conformal prediction intervals
($\varepsilon = 0.10$) at three representative test points, alongside the true $90\%$
intervals under the generating distribution.

\begin{table}[h]
\centering
\begin{tabular}{ccccc}
\hline
$x^*$ & Bin & $\Gamma^{0.1}$ & Width & True $90\%$ width \\
\hline
$0.3$ & 2 & $[-1.30,\; 3.98]$ & $5.28$ & $4.28$ \\
$1.5$ & 4 & $[-0.15,\; 8.74]$ & $8.89$ & $8.22$ \\
$2.7$ & 6 & $[\phantom{-}1.20,\; 13.48]$ & $12.27$ & $12.17$ \\
\hline
\end{tabular}
\caption{Conformal $90\%$ prediction intervals at three representative $x$-values.
Widths grow monotonically with $x$, closely tracking the oracle under the true distribution.}
\label{tab:ci}
\end{table}

Intervals are slightly conservative at $x^* = 0.3$ and $1.5$, reflecting residual
within-bin heterogeneity, and nearly exact at $x^* = 2.7$ (width $12.27$ vs $12.17$).

\paragraph{Empirical coverage.}
On a fresh test set of $2{,}000$ observations, empirical coverage is $94.6\%$,
$89.1\%$, and $79.7\%$ at $\varepsilon = 0.05$, $0.10$, and $0.20$ respectively,
all within the nominal guarantees' sampling uncertainty.

\paragraph{Bin boundaries and sizes.}
The $K^* = 6$ bins selected by cross-validation on the running example are:
\begin{align*}
  B_1 &: x \in [0.00,\,0.28],\;\; m_1 = 86, &
  B_2 &: x \in [0.28,\,0.72],\;\; m_2 = 166, \\
  B_3 &: x \in [0.72,\,1.09],\;\; m_3 = 113, &
  B_4 &: x \in [1.09,\,1.88],\;\; m_4 = 262, \\
  B_5 &: x \in [1.89,\,2.39],\;\; m_5 = 173, &
  B_6 &: x \in [2.40,\,3.00],\;\; m_6 = 200.
\end{align*}
The partition adapts to the jointly growing mean and variance:
the narrow bins $B_1$ and $B_3$ isolate transition regions where the mean slope
and rising $\sigma$ interact most strongly, while the wider bins are supported by
larger sample counts.

\paragraph{P-value distribution.}
Figure~\ref{fig:coverage} shows the distribution of conformal p-values $p(y^*)$ on the
$2{,}000$-point test set.
The near-uniform distribution is consistent with the coverage guarantee, with
slight conservatism reflecting the approximate exchangeability within bins of a
heteroscedastic process.

\begin{figure}[h]
\centering
\includegraphics[width=0.5\textwidth]{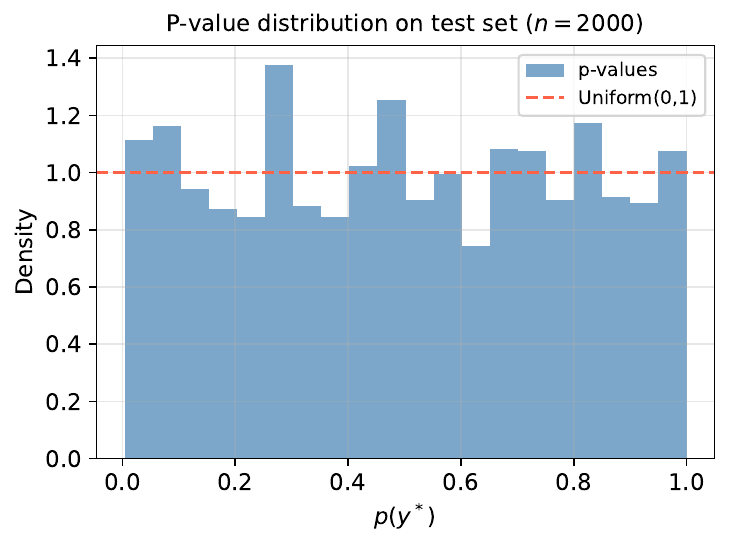}
\caption{Distribution of conformal p-values $p(y^*)$ on the $2{,}000$-point test set.
Under a valid conformal predictor the p-values are uniform, or super-uniform (density $\le 1$
everywhere); slight conservatism reflects the approximate exchangeability within bins.
P-values are discrete, taking values in $\{1/(m+1),\ldots,1\}$ for the $m$ training
points in each bin; the histogram is smoothed by the varying bin sizes across test
points.}
\label{fig:coverage}
\end{figure}

\paragraph{Conditional coverage.}
Figure~\ref{fig:condcov} reports empirical coverage per bin at three levels.
Bins~2--6 are close to the nominal level at all thresholds.
Bin~1 (the leftmost, $m=86$) under-covers: test points near the right edge
face a true conditional distribution shifted relative to the training ECDF,
an inherent price of binning a continuously varying process.

\begin{figure}[h]
\centering
\includegraphics[width=0.85\textwidth]{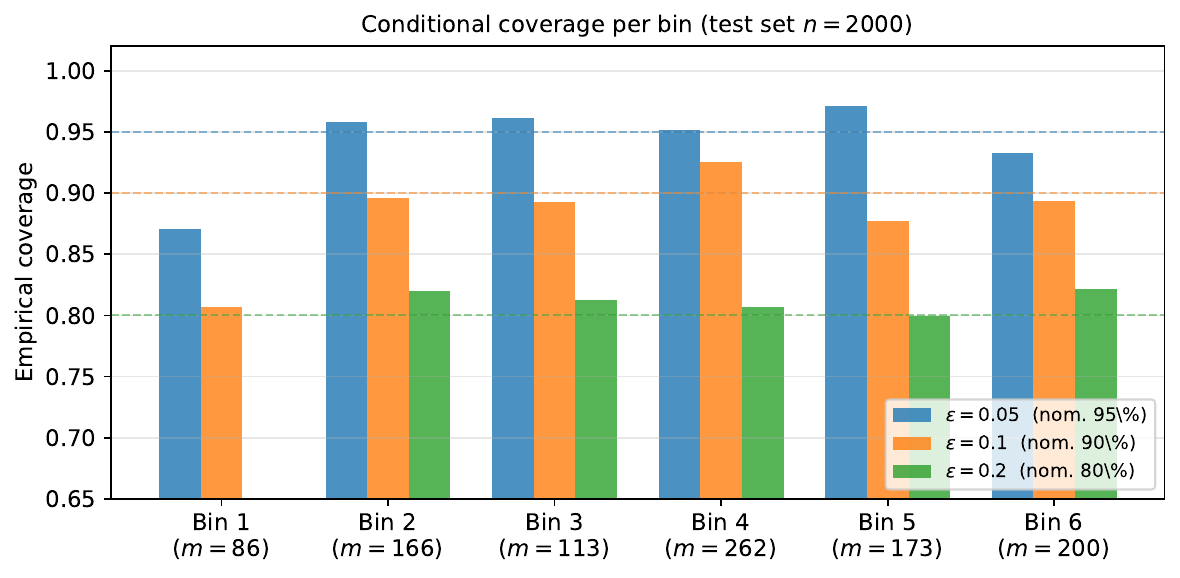}
\caption{Conditional coverage per bin on the $2{,}000$-point test set at three levels
$\varepsilon \in \{0.05, 0.10, 0.20\}$.
Dashed lines mark the nominal level.
Bin~1 ($m=86$, leftmost) under-covers due to within-bin shift of the true conditional
distribution; the remaining bins are close to their nominal targets.}
\label{fig:condcov}
\end{figure}

\paragraph{PIT histograms.}
Figure~\ref{fig:pit} shows the Probability Integral Transform histograms per bin.
Under perfect calibration the PIT values $\hat{F}_b(y^*)$ are uniform.
Bin~1 shows a U-shape (boundary effect); bins~5--6 show a mild rightward skew.
Interior bins are close to uniform, confirming that the CRPS-optimal partition
produces well-calibrated predictive distributions where the bin size is adequate.

\begin{figure}[h]
\centering
\includegraphics[width=\textwidth]{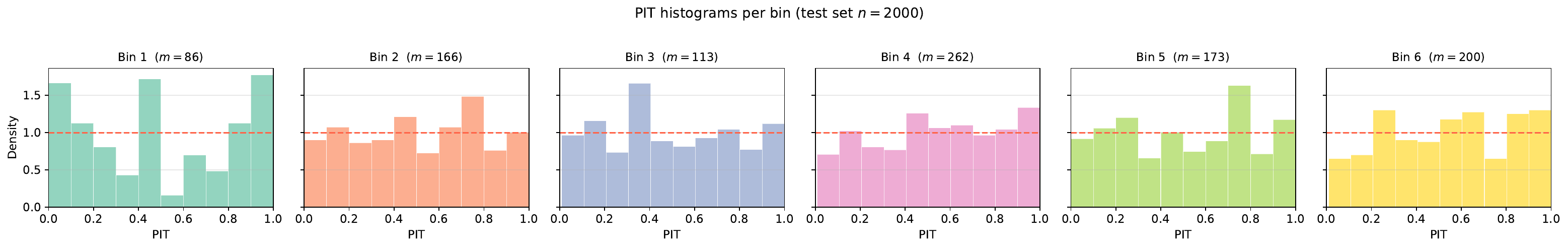}
\caption{PIT histograms per bin.  The PIT for test point $(x^*, y^*)$ assigned to
bin~$b$ is $\hat{F}_b(y^*) = m_b^{-1}\sum_{i=1}^{m_b}\mathbf{1}[y_{b,i} \le y^*]$.
Under perfect calibration the histogram is uniform (dashed line at density~1).
Boundary bins (1 and 6) show departures due to within-bin heterogeneity.}
\label{fig:pit}
\end{figure}

\section{Discussion: Scoring Rules and Cram\'{e}r Distance}
\label{app:scoring}

\paragraph{Other possible scoring rules.}
Among strictly proper rules for the full distribution, we are not aware of any
other whose LOO sum reduces to a scalar computable from pairwise differences:
the log score and Brier score evaluated on a fine grid would require $O(mn)$ or
$O(m\log m)$ work per bin extension rather than $O(\log n)$, compromising the
$O(n^2\log n)$ precomputation.
Whether the coherence between the LOO-CRPS binning criterion and the CRPS
nonconformity score yields formal benefits (tighter bounds, more efficient sets
relative to a mismatched pair) is an open question.

\paragraph{Cram\'{e}r distance equivalence.}
A natural variant of the cross-validated criterion forms the empirical CDF
$\hat{G}_n$ of all held-out observations in a bin and evaluates the predictive
CDF $F$ against it using the \emph{Cram\'{e}r distance}~\citep{Szekely2013,Baringhaus2004}
\[
  d_C(F,\hat{G}_n)
  = \int_{-\infty}^{\infty}\!\bigl(F(t) - \hat{G}_n(t)\bigr)^2\,\mathrm{d}t.
\]
When $\hat{G}_n = \delta_y$ this reduces to $\mathrm{CRPS}(F,y)$~\citep{GneitingRaftery2007}.
Expanding the square gives
\begin{equation}
  \label{eq:cramer-crps}
  d_C(F,\hat{G}_n)
  = \frac{1}{n}\sum_{i=1}^n \mathrm{CRPS}(F,y_i)
  + \int_{-\infty}^{\infty}\hat{G}_n(t)\bigl[\hat{G}_n(t)-1\bigr]\,\mathrm{d}t.
\end{equation}
The second term depends only on $\hat{G}_n$, so the correction cancels in every
pairwise comparison.
The two criteria are therefore identical for all purposes that matter: same
minimiser, same ranking of candidate partitions, same variance of the
model-selection statistic.

\end{document}